  \providecommand\BibTeX{{%
    \normalfont B\kern-0.5em{\scshape i\kern-0.25em b}\kern-0.8em\TeX}}}
\newcommand{\ie}{\emph{i.e., }}
\newcommand{\eg}{\emph{e.g., }}
\newcommand{\etal}{\emph{et al.}}
\newcommand{\wrt}{\emph{w.r.t. }}
\newcommand{\Space}[1]{\mathbb{#1}}
\newcommand{\Set}[1]{\mathcal{#1}}
\begin{document}


\title{On the Equivalence of Decoupled Graph Convolution Network and Label Propagation}

\author{Hande Dong$^1$, Jiawei Chen$^{1*}$, Fuli Feng$^2$, Xiangnan He$^1$, Shuxian Bi$^1$, Zhaolin Ding$^3$, Peng Cui$^4$}

\affiliation{\institution{$^1$University of Science and Technology of China, $^2$National University of Singapore, \\
$^3$North Carolina State University, $^4$Tsinghua University.}} 

\email{donghd@mail.ustc.edu.cn, cjwustc@ustc.edu.cn, fulifeng93@gmail.com, xiangnanhe@gmail.com,} 
\email{stanbi@mail.ustc.edu.cn, zding8@ncsu.edu, cuip@tsinghua.edu.cn} 

\def\authors{Hande Dong, Jiawei Chen, Fuli Feng, Xiangnan He, Shuxian Bi, Zhaolin Ding, and Peng Cui}








\thanks{$*$ Jiawei Chen is the corresponding author.}

\renewcommand{\shortauthors}{\authors}


\begin{abstract}
The original design of Graph Convolution Network (GCN) couples \textit{feature transformation} and  \textit{neighborhood aggregation} for node representation learning. Recently, some work shows that coupling is inferior to decoupling, which supports deep graph propagation better and has become the latest paradigm of GCN (e.g., APPNP~\cite{klicpera_predict_2019} and SGCN~\cite{DBLP:conf/icml/WuSZFYW19}). Despite effectiveness, the working mechanisms of the decoupled GCN are not well understood. 

In this paper, we explore the decoupled GCN for semi-supervised node classification from a novel and fundamental perspective — label propagation. We conduct thorough theoretical analyses, proving that the decoupled GCN is essentially the same as the two-step label propagation: first, propagating the known labels along the graph to generate pseudo-labels for the unlabeled nodes, and second, training normal neural network classifiers on the augmented pseudo-labeled data. More interestingly, we reveal the effectiveness of decoupled GCN: going beyond the conventional label propagation, it could automatically assign structure- and model- aware weights to the pseudo-label data. This explains why the decoupled GCN is relatively robust to the structure noise and over-smoothing, but sensitive to the label noise and model initialization. Based on this insight, we propose a new label propagation method named Propagation then Training Adaptively (PTA), which overcomes the flaws of the decoupled GCN with a dynamic and adaptive weighting strategy. Our PTA is simple yet more effective and robust than decoupled GCN. 
We empirically validate our findings on four benchmark datasets, demonstrating the advantages of our method. The code is available at \url{https://github.com/DongHande/PT_propagation_then_training}. 

\end{abstract}



\begin{CCSXML}
\ccsdesc[300]{Computing methodologies~Neural networks}
<ccs2012>
   <concept>
       <concept_id>10002950.10003624.10003633.10010917</concept_id>
       <concept_desc>Mathematics of computing~Graph algorithms</concept_desc>
       <concept_significance>500</concept_significance>
       </concept>
   <concept>
       <concept_id>10010147.10010257.10010293.10010294</concept_id>
       <concept_desc>Computing methodologies~Neural networks</concept_desc>
       <concept_significance>500</concept_significance>
       </concept>
 </ccs2012>
\end{CCSXML}

\ccsdesc[500]{Mathematics of computing~Graph algorithms}
\ccsdesc[300]{Computing methodologies~Neural networks}

\keywords{Graph Convolution Network, Graph Neural Networks, Decoupled Graph Neural Network, Label Propagation}


\maketitle

\section{Introduction}

Graphs, which reflect relationships between entities, are ubiquitous in the real world, such as social, citation, molecules and biological networks. Recent years have witnessed the flourish of deep learning approaches on graph-based applications, such as node classification~\cite{DBLP:conf/aaai/LiHW18, DBLP:conf/ijcai/ChenGRHXGYZ19}, graph classification~\cite{yao2019graph, DBLP:conf/nips/KnyazevTA19}, link prediction~\cite{DBLP:conf/sigir/0001DWLZ020,DBLP:conf/nips/ZhangC18}, and community detection~\cite{jin2019graph}. Among various techniques, Graph Convolutional Network (GCN) has drawn recent attention due to its effectiveness and flexibility~\cite{kipf2017gcn, DBLP:conf/iclr/VelickovicCCRLB18,  DBLP:conf/iclr/XuHLJ19, DBLP:journals/Vikas/Generalization, DBLP:conf/ijcai/ZhuF0WLZZ20, sign_icml_grl2020, DBLP:conf/icml/YouYL19}. 

There are two important operations in a spatial GCN model: 1) \textit{feature transformation}, which is inherited from conventional neural networks to learn node representations from the past features, and 2) \textit{neighborhood aggregation} (also termed as \textit{propagation}), which updates the representation of a node by aggregating the representations of its neighbors.
In the original GCN~\cite{kipf2017gcn} and many follow-up models~\cite{DBLP:conf/iclr/VelickovicCCRLB18,DBLP:conf/ijcai/XieLYW020, DBLP:conf/www/Zhuang018}, the two operations are coupled, i.e., each graph convolution layer is consisted of both feature transformation and neighborhood aggregation. 
Nevertheless, some recent work find that such a coupling design is unnecessary and causes many issues such as training difficulty, hard to leverage graph structure deeply, and over-smoothing~\cite{klicpera_predict_2019, DBLP:conf/icml/WuSZFYW19, DBLP:conf/sigir/0001DWLZ020}. 
By separating the two operations, simpler yet more effective and interpretable models can be achieved, like the APPNP~\cite{klicpera_predict_2019}, SGCN~\cite{DBLP:conf/icml/WuSZFYW19}, DAGNN~\cite{DBLP:conf/kdd/LiuGJ20dagnn} for node classification, and LightGCN~\cite{DBLP:conf/sigir/0001DWLZ020} for link prediction. 
We term these models that separate the neural network from the propagation scheme as \textit{decoupled GCN}, which has become the latest paradigm of GCN. 
Despite effectiveness, the working mechanisms of the decoupled GCN are not well understood.


In this work, we strive to analyze the decoupled GCN deeply and provide insights into how it works for node classification. 
We prove in theory (by comparing the gradients) that the training stage of the decoupled GCN is essentially equivalent to performing a two-step label propagation. 
Specifically, the first step propagates the known labels along the graph to generate pseudo-labels for the unlabeled nodes, and then the second step trains (non-graph) neural network predictor on the augmented pseudo-labeled data. 
This novel view of label propagation reveals the reasons of the effectiveness of decoupled GCN: 

(1) The pseudo-labeled data serves as an augmentation to supplement the input labeled data. In semi-supervised graph learning settings, the labeled data is usually of small quantity, making neural network predictors have a large variance and easy to overfit. As such, the pseudo-labeled data  augmentation helps to reduce overfitting and improve the generalization ability. 

(2) Instead of assigning a uniform weight to the  pseudo-labeled data, decoupled GCN dynamically adjusts the weights based on the graph structure and model prediction during training. It follows two intuitions to adjust the weight of an unlabeled node: first, the node that is closer to the labeled source node is given a larger weight, and second, the node that has the pseudo-label different from the prediction of the neural network is given a smaller weight. The two intuitions are reasonable and explain why decoupled GCN is relatively robust to structure noise and over-smoothing.  

(3) To predict the label of a node in the inference stage, decoupled GCN combines the predictions of the node's $k$-hop neighbors rather than basing on the node's own prediction. Given the local homogeneity assumption of a graph, such an ensemble method effectively reduces the prediction variance, improving the accuracy and robustness of model prediction.

\begin{figure}[t!]
    \centering
    \includegraphics[width=0.45\textwidth]{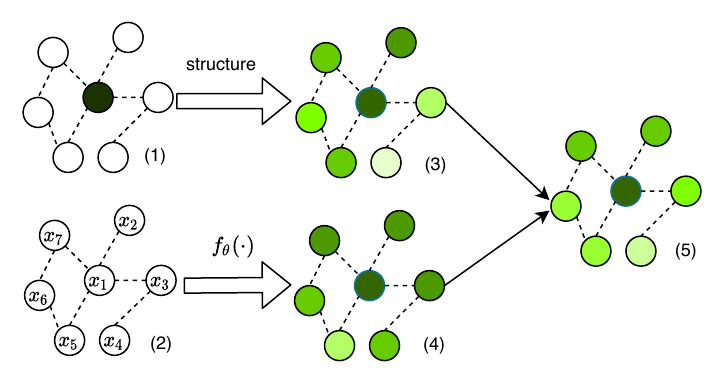}
    \caption{An illustration of the equivalent label propagation view of the decoupled GCN. 
    The dark node in (1) is a labeled source node in the training set. The first step propagates the label of this source node to its $K$-hop neighbors, 
    \ie pseudo-labeling neighbor nodes, 
    and the second step uses the augmented data to train the neural network predictor. 
    The color depth represents the weights of the pseudo-labeled data. 
    The final weights of pseudo-labels in (5) is determined by both graph structure (1)->(3) 
    and model prediction (2)->(4). 
    }
    \label{fig:train_decoupled_GCN}
\end{figure}

Figure \ref{fig:train_decoupled_GCN} illustrates the process.  
In addition to the advantages revealed by the equivalent label propagation view, we further identify two limitations of existing decoupled GCN:

(1) High sensitivity to model initialization. Since the weights of pseudo-labeled data are dynamically adjusted based on model prediction, the initialization of the model exerts a much larger impact on the model training. An ill initialization would generate incorrect predictions in the beginning, making the weights of pseudo-labeled data diverge from the proper values. As a result, the model may converge to an undesirable state. 

(2) Lacking robustness to label noise. The weights of the pseudo-labeled data generated from a labeled source node are normalized to be unity. It implies that different labeled nodes are assumed to have an equal contribution to weigh the pseudo-labeled data.
Such an assumption ignores the quality or importance of labeled data, which may not be ideally clean and have certain noises in practical applications. 


Our analyses provide deep insights into how the decoupled GCN works or fails, inspiring us to develop a better method by fostering strengths and circumventing weaknesses. Specifically, we propose a new method named \textit{Propagation then Training Adaptively} (PTA), which augments the classical label propagation with a carefully designed adaptive weighting strategy.  
After generating the pseudo-labeled data with label propagation, we dynamically control their weights during modeling training. Firstly, PTA abandons the equal-contribution assumption on labeled nodes, specifying the importance of a labeled node based on the consistence between its label and the predicted labels of its neighbors. Secondly, in the early stage of training, PTA reduces the impact of model prediction on the weighting of pseudo-labeled data, since an immature neural network usually yields unreliable prediction; as the training proceeds, PTA gradually enlarges the impact to increase the model's robustness to noise. Through the two designs, we eliminate the limitations of decoupled GCN and meanwhile make full use of its advantages. 


We summarize the contributions of this paper as below:
\begin{itemize}
    \item Conducting thorough theoretical analyses of decoupled GCN, proving that its training stage is equivalent to performing  a two-step label propagation.
    \item Analyzing the advantages and limitations of decoupled GCN from the label propagation perspective.
    \item Proposing a new label propagation method PTA that inherits the merits of decoupled GCN and overcomes its limitations with a carefully designed weighting strategy for pseudo-labeled data. 
    \item Validating our findings on four datasets of semi-supervised node classification and demonstrating the superiority of PTA in multiple aspects of accuracy, robustness, and stability.
\end{itemize}

\section{Preliminaries}
In this section, we first formulate the node classification problem (Section 2.1). We then briefly introduce GCN (Section 2.2) and provide a summary of existing decoupled GCN (Section 2.3). Finally the classical label propagation is presented (Section 2.4).


\subsection{Problem Formulation}
Suppose we have a graph $G = (\Set V, \Set E)$, where $\Set V$ is the node set with $|\Set V|=n$ and $\Set E$ is the edge set. 
In this paper, we consider an undirected graph, whose adjacency matrix is represented with a matrix $\bm A \in \Space R^{n \times n} $. 
Let $\bm D=diag(d_1,d_2,...,d_n)$ denote the degree matrix of $\bm A$, where $d_i=\sum_{j\in \Set V} {a_{ij}}$ is the degree of the node $i$. 
The normalized adjacency matrix is represented as $\hat {\bm A}$. There are several normalization strategies, \eg $\hat {\bm A}=\bm {{
\bm D}}^{-1/2}\bm {{A}}\bm {{\bm D}}^{-1/2}$ \cite{DBLP:journals/corr/BrunaZSL13},  or $\hat {\bm A}=\bm D^{-1}\bm A$~\cite{DBLP:conf/aaai/ZhangCNC18}, 
or $\hat {\bm A}=\bm {\tilde{D}}^{-1/2} \bm {\tilde{A}}\bm {\tilde{D}}^{-1/2} $, $\bm {\tilde{A}} = \bm A + \bm I$~\cite{kipf2017gcn} where self-loop has been added. 
Also, we have features of the nodes, which are represented as a matrix $\bm X \in \mathbb{R} ^{n \times f}$. 
In this paper, we aim to solve graph-based semi-supervised node classification. 
Suppose that only a small set of nodes $i \in \Set V_l \subseteq \Set V$ have labels ${h(i)\in \Set C}$ from the label set $\Set C=\{1,2,...,C\}$. 
We also write the label $h(i)$ as a one-hot vector $\bm y_i\in \mathbb{R}^C$. Let $\bm Y$ denotes the observed label matrix, 
where $i$-th row is either set as $\bm y_i$ for $i\in \Set V_l$ or a zero vector otherwise. The goal is to predict the labels for the unlabeled nodes.

\newcommand{\tabincell}[2]{\begin{tabular}{@{}#1@{}}#2\end{tabular}}  
\begin{table}[t!]
    \centering
    \caption{Notation and Definitions.}
    \resizebox{.45\textwidth}{!}{%
    \begin{tabular}{ccccc}
    \hline
    Notation           &   Annotation     \\
    \hline
    $\hat {\bm A} $& the normalized adjacency matrix  \\
    $\bar{\bm A} $& $\sum \beta_k \hat {\bm A}^k$, the combination of $\hat {\bm A}^k,\, k =0, 1,2\cdots$ \\
    $\bm X \in \mathbb{R} ^{n \times f}$& the feature matrix of the nodes \\
    $\Set V_l, \Set V,\Set C$& the labeled node set, the universal node set, the label set \\
    ${h(i)\in \Set C}$& the label id of node $i$  \\
     $\bm y_i \in \mathbb{R}^C$ & the one-hot representation of the label of node $i$ \\
    $\bm Y$ & the observed label matrix \\
    $\bm Y_{soft}$& the soft label matrix using label propagation\\
    \hline
    \end{tabular}%
    }
    \label{notation}
\end{table}

\subsection{Graph convolution network (GCN)}
GCN performs layer-wise feature transformation and neighborhood aggregation. Each layer of GCN can be written as:
\begin{equation}
    \begin{aligned}
&{\bm H^{(k + 1)}} = \sigma \left( {\hat {\bm A}{\bm H^{(k)}}{\bm W^{(k)}}} \right),
 \end{aligned}
    \label{GCN}
\end{equation}
 where $\bm H^{(0)} = \bm X$, $\sigma(.)$ is an activation function such as ReLU; $\bm H^{(k)}$ and $\bm W^{(k)}$ denote the node representations and transformation parameters of the $k$-th layer. As we can see, each layer consists of two important operations: 1) neighborhood aggregation $\bm P^{(k)} = \hat {\bm A } \bm H^{(k)}$: which refines the representation of a node by aggregating the representations of its neighbors; 2) feature transformation $\bm H^{(k+1)}=\sigma (\bm P^{(k)} \bm W^{(k)})$, which is inherited from conventional neural networks to perform nonlinear feature transformation. By stacking multiple layers, GCN generates node representations by integrating both node features and the graph structure. 

\subsection{Decoupled GCN}
In this subsection, we first define a general architecture of decoupled GCN and show how it subsumes specific decoupled models~\cite{klicpera_predict_2019, DBLP:conf/kdd/LiuGJ20dagnn, DBLP:conf/icml/WuSZFYW19, DBLP:conf/sigir/0001DWLZ020}. We then transform it into a more concise formulation to facilitate theoretically analyzing it in the next section. 

In the original GCN \cite{kipf2017gcn} and many follow-up models ~\cite{DBLP:conf/iclr/VelickovicCCRLB18,DBLP:conf/ijcai/XieLYW020, DBLP:conf/www/Zhuang018}, neighborhood aggregation and feature transformation are coupled with each layer. Nevertheless, some recent works find that such a coupling design is unnecessary and propose to separate these two operations. 
These models can be summarized to a general architecture named \textit{decoupled GCN}:
\begin{equation}
    \begin{aligned}
\hat {\bm Y} = {\mathop{\rm softmax}\nolimits} \left( {\bar{\bm A}{\bm f_\theta }(\bm X)} \right),
 \end{aligned}
    \label{decouple}
\end{equation}
where $\bm f_\theta(\cdot)$ is a feature transformation function, which can be done by neural network. 
$\bar {\bm A} = \sum \beta_k \hat {\bm A}^k$ is determined by the graph structure and the propagation strategy, whose element reflects the proximity of two nodes in the graph. We will next show how this architecture subsumes existing decoupled methods.

\textit{APPNP and DAGNN.} APPNP~\cite{klicpera_predict_2019} claims several advantages of separating neural network modeling with the propagation scheme, formulating the model as: 
\begin{equation}
    \begin{aligned}
    &\bm H^{(0)} = \bm f_\theta (\bm X)\\
    &\bm H^{(k)} = (1-\alpha)\hat {\bm A} \bm H^{(k-1)} + \alpha \bm H^{(0)}, \quad k=1,2,...K-1\\
    &\bm {\hat Y}_{APPNP} = \operatorname{softmax}\left(\bm H^{(K)}\right).
    \end{aligned}
    \label{APPNP}
\end{equation}
APPNP uses Personalized PageRank~\cite{ilprints422} as the propagation strategy on graph. The model can be subsumed into Equation~\ref{decouple} by setting $\bar {\bm A}={(1 - \alpha )^K}{{\hat {\bm A}}^K} + \alpha \sum\nolimits_{k = 0}^{K - 1} {{{(1 - \alpha )}^k}{{\hat {\bm A}}^k}}$. The proof is presented in Appendix~\ref{APPNP_decoupled_GCN}. When $K \to \infty $, $\bar {\bm A}=\alpha(\bm I-(1-\alpha)\hat {\bm A})^{-1}$  is the diffusion kernel \cite{zhou2004learning}, which has been widely adopted to measure proximity in the graph. Another recent work DAGNN ~\cite{DBLP:conf/kdd/LiuGJ20dagnn} is similar to APPNP but uses a different propagation scheme: $\bar {\bm A} = \sum\nolimits_{k = 0}^K {{s_k}{{\hat {\bm A}}^k}}$, where $s_k$ controls the importance of different layers. 

\textit{SGCN and LightGCN.} The two models are designed by simplifying the original GCN to make it more concise and appropriate for downstream applications. 
SGCN and LightGCN are similar but for different tasks, so here we just present SGCN: 
\begin{equation}
    \begin{aligned}
\hat{\bm {Y}}_{\mathrm{SGCN}}=\operatorname{softmax}\left(\bm {S}^{K} \bm {X} \bm \Theta\right).
    \end{aligned}
    \label{SGCN}
\end{equation}
Naturally, it can be subsumed into Equation~\ref{decouple} by setting $\bm f_\theta(\bm X)=\bm X\bm \Theta$ and $\bar {\bm A}=\bm S^{K}$. 

In this paper, we would like to define a more concise formulation of decoupled GCN as follows:
\begin{equation}
    \begin{aligned}
\hat {\bm Y} =  {\bar {\bm A} {\bm f_\theta }(\bm X)},
  \end{aligned}
    \label{decou}
\end{equation}
where softmax function has been integrated into feature transformation function, \ie ${\bm f_\theta }(\bm X) \leftarrow \operatorname{softmax} ({\bm f_\theta }(\bm X))$. 
Although the predictions are not probabilities, integrating $\operatorname{softmax}(\cdot)$ into $\bm f_\theta$ does not affect model performance, since $\operatorname{softmax}(\cdot)$ is a monotonic function. Moreover, we analyze such an operation does not affect the model optimization in Appendix~\ref{softmax}. 
\subsection{Label Propagation}

Label propagation (LP)~\cite{zhu2002learning} is a classic semi-supervised learning algorithm that propagates the known labels along the graph to other unlabeled nodes. 
It can be formulated as follows:
\begin{equation}
    \begin{aligned}
    \bm Y^{(0)}&=\bm Y \\
    \bm Y^{(k)}&=\hat {\bm A} \bm Y^{(k-1)} \quad k=1,2,...,K-1\\
    \bm y_{i}^{(k)} &= \bm y_i,  \quad \forall i \in \Set V_l,
\end{aligned}
\label{LP1}
\end{equation}
where $\bm Y^{(k)}$ denotes the soft label matrix in iteration $k$, where each element $y^{(k)}_{ic}$ reflects how likely that the label of node $i$ is predicted as the label $c$. 
Similar to decoupled GCN, various propagation schemes can be adopted for LP, 
such as Personalized PageRank which has been adopted by APPNP where $\bm Y^{(k)}=(1-\alpha) \hat {\bm A} \bm Y^{(k-1)}+ \alpha \bm Y^{(0)}$. 
Similar to Equation~(\ref{decou}), the general formulation of LP can be summarized as follows: 
\begin{equation}
    \begin{aligned}
    {\bm Y}_{soft} = \hat {\bm Y} = \bar {\bm A}\bm Y.
    \end{aligned}
\label{LP}
\end{equation}

\section{Understanding Decoupled GCN from Label prorogation}
In this section, we first define a simple training framework called \textit{Propagation then Training (PT)} (Section 3.1). We then conduct a rigorous mathematical analysis of  decoupled GCN to show that training a decoupled GCN is essentially equivalent to a case of PT (Section 3.2). 
Based on this insight, we further discuss the advantages and weaknesses of decoupled GCN (Section 3.3).

\subsection{Propagation then Training}
Before delving into decoupled GCN, we would like to design a simple model for node classification based on label propagation, which consists of two steps: 1) using the LP algorithm to propagate the known labels along the graph to generate pseudo-labels for the unlabeled nodes, and 2) training a neural network predictor on the augmented pseudo-labeled data. This training paradigm is simple and intuitive, using label propagation for data augmentation and benefiting model generalization. Formally, the model is optimized with the following objective function:
\begin{equation}
    \begin{aligned}
    L(\theta ) =\ell ({\bm f_\theta }(\bm X), \bar {\bm A}{\bm Y}),
\end{aligned}
\label{LPTT}
\end{equation}
where $\ell(\cdot)$ denotes the loss function between the predictions ${\bm f_\theta }(\bm X)$ and the soft labels $\bm Y_{soft}=\bar {\bm A}{\bm Y}$. For the node classification task, cross-entropy loss is most widely used~\cite{murphy2012machine}. Then the objective function for Equation~(\ref{LPTT}) can be re-written as follows: 
\begin{equation}
    \begin{aligned}
    L(\theta ) &= -\sum\limits_{i \in \Set V, k \in \Set C} \left(\sum\limits_{j\in \Set V_l} {{\bar a}_{ij}} {y_{jk}}\right)\log {f_{ik}} \\
    &=-\sum\limits_{i\in \Set V,j \in \Set V_l} {{\bar a}_{ij}}{\sum\limits_{k \in \Set C} {{y_{jk}}\log {f_{ik}}} } \\ 
    &= \sum\limits_{i\in \Set V,j \in \Set V_l} {{\bar a}_{ij}} \operatorname{CE} \left(\bm f_i, \bm y_j\right),
\end{aligned}
\label{LPTT1}
\end{equation}
where ${\bar a}_{ij}$ denotes the $(i,j)$-th element of the matrix $\bar {\bm A}$, $\operatorname{CE}(\cdot)$ denotes the cross entropy loss between the predictions $\bm f_{i}$ and the pseudo-labels $\bm y_j$. 
$\bm f_{i}$ represents the result from the function $\bm f_\theta(\bm x_i)$ with features $\bm x_i$ for node $i$ and $\bm f_{i,k}$ denotes its $k-$th element. 
The reformulated loss can be interpreted as using node $j$'s label to train the node $i$ 
and weighting this pseudo-labeled instance $(\bm x_i,\bm y_j)$ based on their graph proximity ${\bar a}_{ij}$. 
It is reasonable as the closer nodes in the graph usually exhibit more similarity. Larger ${\bar a}_{ij}$ suggests larger likelihood that the node $j$ is labeled as $\bm y_i$. 
These augmented data transfer the knowledge of the labeled nodes to their close neighbors. 
The weight of instance $(\bm x_i,\bm y_j)$ is static ${\bar a}_{ij}$, \ie staying the same value during training, 
so we name this specific model as \textit{Propagation then Training Statically (PTS)}. 

In most cases, static weighting is unsatisfied (which will be discussed in next subsection). Thus, we would like to extend the PTS to a more general framework with flexible weighting strategies. We name the framework as \textit{Propagation then Training (PT)}, which optimizes the following objective function:
\begin{equation}
    L_{PT}=L(\theta ) = \sum\limits_{i\in \Set V,j \in \Set V_l} w_{ij} \operatorname{CE} \left(\bm f_i, \bm y_j\right),
\end{equation}
where $w \mathop  = \limits^{\operatorname{define}}  g(\bm f(X), \bm A)$ means a general weighting strategy, which is controlled by the model prediction and propagation scheme with a specific function $g$.

\subsection{Connection between Decoupled GCN and PT}
In this subsection, we conduct a mathematical analysis of the gradients of decoupled GCN, proving that the training stage of decoupled GCN is essentially equivalent to a special case of PT. 
In fact, we have the following lemma:
\newtheorem{lem}{Lemma}
\begin{lem}
\label{la1}
Training a decoupled GCN is equivalent to performing \textit{Propagation then Training} with 
dynamic weight $w_{ij} = \frac{\bar a_{ji} f_{i,h(j)}}{{\sum\limits_{q \in \Set V} {{{\bar a}_{jq}}{f_{q,h(j)}}}}}$  for each pseudo-labeled data $(\bm x_i,\bm y_j)$, 
where $h(j)$ means the label ID of node j, \ie $y_{j, h(j)} = 1$.
\end{lem}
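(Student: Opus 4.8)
\textit{Proof proposal.} The plan is to read ``equivalence of training'' as coincidence of the parameter‑gradient fields: at any $\theta$, one step of gradient descent on the decoupled‑GCN loss produces the same update as one step of gradient descent on $L_{PT}$ with the stated weights, provided the $w_{ij}$ are recomputed from the current $\bm f_\theta(\bm X)$ and treated as detached constants within that step. So the whole task reduces to computing two gradients and matching them summand by summand over the labeled source nodes.

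First I would write the decoupled‑GCN objective in coordinates. With $\hat{\bm Y}=\bar{\bm A}\bm f_\theta(\bm X)$, $\bm y_j$ one‑hot ($y_{j,h(j)}=1$), and the softmax folded into $\bm f_\theta$ so that $f_{q,h(j)}>0$ and $\bar a_{jq}\ge 0$, cross‑entropy over the labeled set collapses to $L_{GCN}(\theta)=-\sum_{j\in\Set V_l}\log\big(\sum_{q\in\Set V}\bar a_{jq}f_{q,h(j)}\big)$, a sum of one term per labeled node $j$. Fixing $j$ and writing $g_j(\theta)=\sum_q\bar a_{jq}f_{q,h(j)}$, the chain rule gives $\nabla_\theta(-\log g_j)=-g_j^{-1}\nabla_\theta g_j=-\sum_{i\in\Set V}\frac{\bar a_{ji}}{\sum_q\bar a_{jq}f_{q,h(j)}}\,\nabla_\theta f_{i,h(j)}$, where I use that $\bar{\bm A}$ is a fixed function of the graph, so no gradient flows through it and the only $\theta$‑dependence is through $\bm f_\theta$. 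On the PT side, the contribution of source $j$ is $\sum_{i\in\Set V}w_{ij}\operatorname{CE}(\bm f_i,\bm y_j)=-\sum_{i\in\Set V}w_{ij}\log f_{i,h(j)}$, whose gradient with $w_{ij}$ held fixed is $-\sum_{i\in\Set V}w_{ij}f_{i,h(j)}^{-1}\nabla_\theta f_{i,h(j)}$. Matching the coefficient of each $\nabla_\theta f_{i,h(j)}$ forces $w_{ij}f_{i,h(j)}^{-1}=\bar a_{ji}\big/\sum_q\bar a_{jq}f_{q,h(j)}$, i.e. $w_{ij}=\bar a_{ji}f_{i,h(j)}\big/\sum_q\bar a_{jq}f_{q,h(j)}$, exactly the weight in the statement; conversely, substituting this $w_{ij}$ back reproduces the GCN gradient identically. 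I would also record the byproduct $\sum_{i\in\Set V}w_{ij}=1$ for every $j$ (numerator and denominator are the same sum up to renaming the dummy index), the ``unit contribution per labeled node'' fact invoked in the subsequent discussion.

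The step I expect to need the most care is the meaning of ``equivalent'': $L_{GCN}$ and $L_{PT}$ are \emph{not} equal as functions of $\theta$ --- one is the log of a convex combination, the other a convex combination of logs, related only by Jensen's inequality --- so the statement is dynamical, not static. I would make this precise by viewing PT as minimizing the auxiliary functional $\widetilde L(\theta;\bar\theta)=\sum_{i\in\Set V,\,j\in\Set V_l}w_{ij}(\bar\theta)\operatorname{CE}(\bm f_i(\theta),\bm y_j)$, where $\bar\theta$ is a stop‑gradient copy used only to form the weights, and then noting that the computation above shows $\nabla_\theta\widetilde L(\theta;\bar\theta)\big|_{\bar\theta=\theta}=\nabla_\theta L_{GCN}(\theta)$ for every $\theta$. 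Since gradient‑based training of the decoupled GCN moves $\theta$ along exactly this common gradient at each iteration, with the weights refreshed from the current prediction, the two procedures generate the same iterates from the same initialization and learning rate, which is the claimed equivalence.
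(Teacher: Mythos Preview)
Your proposal is correct and follows essentially the same route as the paper: write out the cross-entropy loss of the decoupled GCN, differentiate through the log via the chain rule, use the one-hot property to collapse to the $h(j)$ coordinate, and then match the resulting per-$(i,j)$ coefficients against $\nabla_\theta L_{PT}$ with $w_{ij}$ held fixed. Your explicit stop-gradient formulation $\widetilde L(\theta;\bar\theta)$ and the observation that the two losses differ by Jensen's inequality (so only the gradient fields coincide) make precise a point the paper leaves implicit, and your side remark $\sum_{i\in\Set V}w_{ij}=1$ is exactly the normalization the paper later exploits in its discussion of weaknesses.
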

\begin{proof}
The lemma can be proved by comparing the gradients of decoupled GCN and PT. Adopting cross entropy loss, the loss function of decoupled GCN is: 
\begin{equation}
    \begin{aligned}
{L_{DGCN}}& = \ell(\bar {\bm A}{\bm f_\theta }(\bm X),\bm Y)\\
 &= -\sum\limits_{j \in {\Set V_l}, k \in \Set C} {{{y_{jk}}(\log \sum\limits_{i \in \Set V} {{{\bar a}_{ji}}{f_{ik}}} )} }. \\
\end{aligned}
\label{lem0}
\end{equation}
The gradients of the objective function \wrt $\theta$ can be written as:
\begin{equation}
    \begin{aligned}
{\nabla _\theta }{L_{DGCN}}& = -\sum\limits_{j \in {\Set V_l},k \in \Set C} {{{y_{jk}}{\nabla _\theta }(\log \sum\limits_{i \in \Set V} {{{\bar a}_{ji}}{f_{ik}}} )} } \\
 &=-\sum\limits_{j \in {\Set V_l},k \in \Set C} {{y_{jk}} { {\frac{\sum\limits_{i \in \Set V}{{{\bar a}_{ji}}{\nabla _\theta }{f_{ik}}}}{{\sum\limits_{q \in \Set V} {{{\bar a}_{jq}}{f_{qk}}} }}} } }.\\
\end{aligned}
\label{lem1}
\end{equation}
As $\bm y_j$ is an one-hot vector, only the $h(j)$-th element of $\bm y_j$ equal to one. The gradients can be rewritten as follows: 
\begin{equation}
    \begin{aligned}
{\nabla _\theta }{L_{DGCN}}& = -\sum\limits_{j \in {\Set V_l}} {{y_{j,h(j)}} { {\frac{\sum\limits_{i \in \Set V}{{{\bar a}_{ji}}{\nabla _\theta }{f_{i,h(j)}}}}{{\sum\limits_{q \in \Set V} {{{\bar a}_{jq}}{f_{q,h(j)}}} }}} } }\\
 &=-\sum\limits_{i \in \Set V,j \in {\Set V_l}} \frac{\bar a_{ji} f_{i,h(j)}}{{\sum\limits_{q \in \Set V} {{{\bar a}_{jq}}{f_{q,h(j)}}}}} {y_{j,h(j)}} \frac{{\nabla _\theta }{f_{i,h(j)}}}{{f_{i,h(j)}}} \\ 
 &=\sum\limits_{i \in \Set V,j \in {\Set V_l}} \frac{\bar a_{ji} f_{i,h(j)}}{{\sum\limits_{q \in \Set V} {{{\bar a}_{jq}}{f_{q,h(j)}}}}} {\nabla _\theta } \operatorname{CE} \left( \bm f_i,\bm y_j \right).
\end{aligned}
\label{lem2}
\end{equation}
Note that the gradients of the PT \wrt $\theta$ is:
\begin{equation}
    \begin{aligned}
{\nabla _\theta}{L_{PT}}&=\sum\limits_{j\in {\Set V_l},i \in \Set V}  w_{ij} {\nabla _\theta } \operatorname{CE} \left( \bm f_i,\bm y_j \right).
\end{aligned}
\label{lem3}
\end{equation}
Comparing Equation~(\ref{lem1}) with Equation~(\ref{lem3}), the decoupled GCN is a special case of LP by setting $w_{ij} $ as $ \frac{\bar a_{ji} f_{i,h(j)}}{{\sum\limits_{q \in \Set V} {{{\bar a}_{jq}}{f_{q,h(j)}}}}}$. 
\end{proof}

\subsection{Analyzing Decoupled GCN from PT}
Based on above proof, the working mechanism of decoupled GCN can be well understood. 
It is equivalent to performing a label propagation to generate pseudo-labels and then optimizing the neural predictor on the pseudo-labeled data with a weighted loss function. 
As we can see from Equation (\ref{lem2}), the essence of decoupled GCN is to construct more training instances with label propagation. 
Specifically, it propagates the known label of node $j$'s to other nodes (such as node $i$) and uses the augmented pseudo-label data ($\bm x_i,\bm y_j$) to learn a better classifier. 
Moreover, these training instances are weighted by the product of the graph proximity $\bar a_{ji}$ and the model prediction $f_{i,h(j)}$. This finding is highly interesting and helps us to understand the reasons of the effectiveness of decoupled GCN:

(S1) Label propagation serves as a data augmentation strategy to supplement the input labeled data. In semi-supervised node classification task, the labeled data is usually of small quantity, making it insufficient to train a good neural predictor. The model would have a large variance and easily sink into over-fitting. As such, the pseudo-labeled data augmentation helps to reduce overfitting and improves model performance.

(S2) Instead of assigning a static weight to the pseudo-labeled data, decoupled GCN dynamically adjusts the weight. On the one hand, the node that is closer to the labeled source node is given a larger weight $\bar a_{ji}$. As analyzed in Section 4.1, this setting matches our intuition, as closer nodes usually exhibit more similarity in their properties or labels. On the other hand, if a node has a pseudo-label highly different from the prediction, the pseudo-labeled data would obtain smaller weight. This setting makes the model more robust to the structure noise, which is common in real-world graph topology. Real-world graphs may not be ideally clean and have certain noises in edges. Two connected nodes sometimes exhibit different properties and belong to different classes. For example, in a social network, connected friends may belong to different schools; in a citation network, a paper may cite the work from other fields. The pseudo-labels propagated along such noisy inter-class edges are unreasonable and may hurt the model performance. In decoupled GCN, this bad effect could be mitigated, as the contribution of these unreliable pseudo-labeled data will be reduced. Moreover, this setting endows the model the potential to mitigate over-smoothing. The noisy signal from distant labeled nodes would be attenuated heavily by the weights. 

(S3) To predict the label of a node in the inference stage, as shown in Figure \ref{ensemble}, decoupled GCN combines the predictions of the node's $K$-hop neighbors rather than basing only on the node's own prediction. 
For a target node $i$, decoupled GCN finds its $K$-order neighbors and combines their predictions with the trained neural network. Such an ensemble mechanism further boosts model's performance by reducing the prediction variance. 

\begin{figure}[t!]
    \centering
    \includegraphics[width=0.4\textwidth]{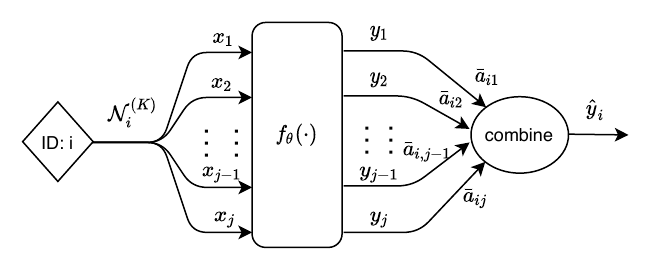}
    \caption{APPNP ensembles the predictions of neighbors to generate final prediction in the inference phase. }
    \label{ensemble}
\end{figure}

However, some weaknesses of the decoupled GCN are revealed.

(W1) Since the weights of pseudo-labeled data are dynamically adjusted based on model prediction, the initialization of the model exerts a much larger impact on the model training. An ill initialization
would generate incorrect predictions in the beginning, making the weights of pseudo-labeled data diverge from the proper values. It further skews the contribution of pseudo-labeled data and the model may converge to undesirable states. 

(W2) The weights of the pseudo-labeled data generated from a labeled source node are normalized to be unity, \ie $\sum_{i \in \Set V}w_{ij} = 1$. It implies that different labeled nodes are assumed to have an equal contribution
to weigh the pseudo-labeled data. Such an assumption ignores the quality or importance of labeled data, which may not be ideally clean and have certain noises in practical applications. Some labels are carefully labeled by expert, whereas some labels may be contaminated by accidents or envirnmental noises. Treating them equally is not reasonble. As such, the model is vulnerable to label noises which will deteriorate the model performance.

\section{Proposed method: Propagation Then Training Adapatively}
Given the analysis of decoupled GCN, in this section, we aim to develop a better method that can foster its merits and overcome its weaknesses. We name the proposed method as \textit{Propagation then Training Adaptively (PTA)}, which improves the LP view of decoupled GCN with an adaptive weighting strategy. Focusing on the two weaknesses of decoupled GCN, PTA makes two revisions:

(1) To make the model more robust to label noise, we remove the normalization of weights of decoupled GCN to let different labeled data exert varying impact on model training. The weight without normalization can be written as: 
\begin{equation}
    \begin{aligned}
w_{ij} = {\bar a_{ji} f_{i,h(j)}}.
    \end{aligned}
\end{equation}
The accumulated weight of the pseudo-labeled data generated from a specific labeled source node can be written as: 
\begin{equation}
    \begin{aligned}
S_{j}={\sum\limits_{i \in V} {{{\bar a}_{ji}}{f_{i,h(j)}}} }.
    \end{aligned}
\end{equation}
which is dynamically adjusted according to ${f_{i,h(j)}}$. 
Specifically, the importance of each labeled data $S_{j}$ is determined by the consistence between its label and the predicted labels of its neighbors (or multi-hop neighbors). When the labels of its neighbors are predicted to be different from $h(j)$ (\ie $f_{i,h(j)}$ is small), it implies the labeled data may be contaminated by noises. Naturally, this design reduces the contribution of this unreliable labeled data and makes the model more robust to label noise. 
Another advantage of removing normalization is making the model more concise and easier to implement,  as the computationally expensive summation over the neighbors (or multi-hop neighbors) is avoided. Nevertheless, this design may increase model's sensitivity to the model initialization which also determines the impact of the labeled data. We address this issue in the next design. 

(2) The sensitivity to model initialization is caused by the weights $f_{i,k(j)}$.
However, blindly removing $f_{i,k(j)}$ is problematic as it would hurt model robustness to label noise and structure noise.
To deal with this problem, we develop an adaptive weighting strategy as follows:
\begin{equation}
    \begin{aligned}
    w_{ij} = \bar a_{ji} f_{i,h(j)}^\gamma ,\quad \quad \gamma  = \log (1 + e/\epsilon), \\
    \end{aligned}
\label{pta}
\end{equation}
 where $\gamma$ is define to control the impact of $f_{i,h(j)}$ on the weighting of the pseudo-labeled data, which will evolve with the training process. $e$ denotes the current training epoch and $\epsilon$ is a temperature hyper-parameter controlling the sensitivity of $\gamma$ to $e$. Our design is simple but rather effective. In the early stage of training, when the immature neural network generates relatively unreliable prediction, PTA reduces the impact of model prediction on weighting pseudo-labeled data. This setting makes the model yield stable results. With the training proceeding, as the neural predictor gradually gives more accurate results, PTA enlarges its impact to make the model robust to both label noise and structure noise.
 
 
Through the two designs, we eliminate the limitations of decoupled GCN and meanwhile make full use of its advatanges. Overall, PTA optimizes the following objective function:
\begin{equation}
    \begin{aligned}
L_{PTA}(\theta ) = \sum\limits_{i\in V,j \in V_l} w_{ij} \operatorname{CE} \left(\bm f_i, \bm y_j\right) ,\quad \quad {w_{ij}} =\bar a_{ji}{f^\gamma_{i,h(j)}}.
   \end{aligned}
\end{equation}
We also give a concise matrix-wise formula as follows: 
\begin{equation}
    \begin{aligned}
  L_{PTA}(\theta ) &   = - \mathop{SUM} \left( \bm Y_{soft} \otimes \bm f(\bm X)^\gamma \otimes log\left(\bm f_\theta(\bm X)\right)\right),\\
   \end{aligned}
   \label{concise_form}
\end{equation}
where $\otimes$ represents the element-wise product, $\mathop{SUM}(\cdot)$ represents the sum of all elements in matrix,  $\bm Y_{soft}$ represents the soft label generated using label propagation, $\bm f(\bm X)$ does not propagate gradients backward, and gradients propagate only from $log\left(\bm f_\theta(\bm X)\right)$. The proof for Equation~\ref{concise_form} can be seen in Appendix~\ref{concise_loss}. 
The complete framework of PTA is in Appendix~\ref{framework}. 

Compared the concise form of PTA with vanilla decoupled GCN (\eg APPNP), PTA also has an advantage of efficient computation. 
In decoupled GCN, both feature transformation and neighbor aggregation need to be conducted in each epoch. 
Moreover, the updating of the transformation function $\bm f_\theta(\bm X)$ requires back propagation along both operations. 
The complexity of decoupled GCN is $\mathcal{T}(\bm f(\bm X))+\mathcal{T}(\bar {\bm A}\bm H)$, 
where $\mathcal{T}(\bm f(\bm X))$ and $\mathcal{T}(\bar {\bm A}\bm H)$ denote the complexity of the two operations, respectively. 
In our PTA, the two operations have been thoroughly separated, \ie graph propagation can be pre-processed and used for all epochs while each training epoch only need to consider feature transformation. 
The overall training algorithm of our PTA is presented in Algorithm \ref{PTA}. As we can see, we just need to run the label propagation to generate soft-label matrix $\bm Y_{soft}$, 
which can be considered as data pre-processing. We then train the neural predictor $\bm f_\theta(\bm X)$ with soft label $\bm Y_{soft}$ and additional weighting ${\bm f(\bm X)}^\gamma$, 
avoiding the expensive forward and back propagation of the neighbor aggregation in each epoch. 
The complexity of PTA is reduced from $\mathcal{T}(\bm f(\bm X))+\mathcal{T}(\bar {\bm A}\bm H)$ to $\mathcal{T}(\bm f(\bm X))$. 

It is worth mentioning that the neighborhood aggregation mechanism of GCN is non-trivial to implement in parallel~\cite{DBLP:conf/ipps/TianMYD20, DBLP:journals/tsipn/ScardapaneSL21}, making it the efficiency bottleneck on large graphs. In contrast, the label propagation used in PTA is well studied and much easier to implement on large graphs. As such, our PTA eases the implementation of graph learning and has great learning potentials to be deployed in large-scale industrial applications. 



\begin{algorithm}[t]
\caption{Propagation then Training Adaptively. }
\label{PTA}
\begin{algorithmic}[0] 
\REQUIRE ~~
Graph $G = (\Set V, \Set E)$; Features $\bm X$; Observed labels $\bm Y$;
\ENSURE ~~
Neural network predictor $\bm y=\bm f_\theta(\bm x)$ . \\
\STATE Generate soft label matrix $\bm {\hat Y}_{soft}$ with label propagation: 
\FOR{$e$ = $1$ to $Epoch_{max}$} 
\STATE Calculate the adaptive factor: $\gamma = log(1+e/\varepsilon)$
\STATE Calculate loss: $L=-\mathop{SUM} (\bm Y_{soft}\otimes \bm f(\bm X)^\gamma \otimes log(\bm f_\theta(\bm X)))$
\STATE Optimize $\theta$ by minimizing $L + \lambda L_{reg}$ with gradient descent
\ENDFOR
\STATE \textbf{return} Neural network predictor $\bm y=\bm f_\theta(\bm x)$
\end{algorithmic}
\end{algorithm}

\section{experiments}
We conduct experiments on four real-world benchmark datasets to evaluate the performance of existing decoupled methods and our proposed PTA. We aim to answer the following research questions: 
\begin{itemize}
    \item [\textbf{RQ1:}] Do three advantages of decoupled GCN that we analyzed from LP perspective indeed boost its performance? 
    \item [\textbf{RQ2:}] Is decoupled GCN sensitive to the initialization and the label noise? How does PTA overcome these problems?
    \item [\textbf{RQ3:}] How does the proposed PTA perform as compared with state-of-the-art GCN methods? 
    \item [\textbf{RQ4:}] Is PTA more efficient than decoupled GCN? 
\end{itemize}

\subsection{Experimental Setup}
We take APPNP as the representative model of decoupled GCN for experiments, since APPNP performs similarly with DAGNN while they are both superior over SGCN. To reduce the experiment workload and keep the comparison fair, we closely follow the settings of the APPNP work~\cite{klicpera_predict_2019}, including the experimental datasets and various implementation details. 
\paragraph{Datasets} Following the APPNP work~\cite{klicpera_predict_2019}, we also use four node-classification benchmark datasets for evaluation, including CITESEER~\cite{sen2008collective}, 
CORA\_ML~\cite{mccallum2000automating}, PUBMED~\cite{namata2012query} and MICROSOFT ACADEMIC~\cite{shchur2018pitfalls}. 
CITESEER, CORA\_ML, and PUBMED are citation networks, where each node represents a paper and an edge indicates a citation relationship. 
MICROSOFT ACADEMIC is a co-authorship network, where nodes and edges represent authors and co-author relationship, respectively. 
The dataset statistics is summarized in Table~\ref{Dataset}. Also, we follow \cite{klicpera_predict_2019} and split each dataset into training set, early-stopping set, and test set, where each class has 20 labeled nodes in the training set. 
\begin{table}[t]
    \centering
    \caption{Statistics of the datasets.}
        \begin{tabular}{ccccc}
        \hline
        Dataset     & Nodes  & Edges   & Features & Classes  \\
        \hline
        CITESEER    & 2,110  & 3,668   & 3,703    & 6        \\
        CORA\_ML        & 2,810  & 7,981   & 2,879   & 7      \\
        PUBMED      & 19,717 & 44,324  & 500      & 3          \\
        MS\_ACADEMIC      & 18,333 & 81,894  & 6,805      & 15         \\
        \hline
        \end{tabular}
    \label{Dataset}
\end{table}


\paragraph{Compared methods}
The main competing method is APPNP, which has shown to outperform several graph-based models, including Network of GCNs (N-GCN)~\cite{DBLP:conf/uai/Abu-El-HaijaKPL19}, graph attention network (GAT) \cite{DBLP:conf/iclr/VelickovicCCRLB18}, jumping knowledge networks with concatenation (JK) ~\cite{xu2018representation}, etc. As the comparison is done on the same datasets under the same evaluation protocol, we do not further compare with these methods. In addition to APPNP, we further compare with two relevant and competitive decoupled methods: DAGNN~\cite{DBLP:conf/kdd/LiuGJ20dagnn} and SGCN~\cite{DBLP:conf/icml/WuSZFYW19}, and two baseline models MLP, GCN~\cite{kipf2017gcn}. Further two variants of PTA are tested: 
\begin{itemize}
\item \textbf{PTS}: propagation then training statically, 
where we give a graph-based static weighting $a_{ji}$ to pseudo-labeled data. PTS can be considered as simple version of APPNP and PTA, where model-based weighting is removed. More details about PTS can refer to section 3.1.
\item \textbf{PTD}: propagation then training dynamically. 
PTD can be considered as an intermediate model between APPNP and PTA, where each pseudo-labeled data is weighed by $a_{ji}f_{i,h(j)}$. Comparing with APPNP, weighting normalization has been removed; Comparing with PTA, PTD does not use the adaptive factor. 
\end{itemize}

\paragraph{Implementation details}
In our experiments, we take APPNP as the representative model of decoupled GCN for experiments. For fair comparison, all the configures of PTD, PTS, PTA are the same as APPNP including layers, hidden units, regularization, early stopping, initialization and optimizer. Also, we use the same propagation scheme as APPNP. That is, we use PageRank to propagate labels with the same hyper-parameters. The additional parameter $\varepsilon$ in Equation~(\ref{pta}) is set as 100 across all datasets. We run each experiment 100 times on multiple random splits and initialization. More details of the setting of our PTA are presented in Appendix~\ref{detail}. 
For the compared methods, we refer to their results or settings that are reported in their papers. 
We will share our source code when the paper gets published.


\subsection{Empirical Analyses of APPNP (RQ1)}
In this subsection, we take APPNP as a representative method to answer the RQ1. As discussed in section 3.3, we attribute the effectiveness of decoupled GCN to three different aspects: 1) data augmentation through label propagation, 2) dynamic weighting, and 3) Ensembles multiply predictors. To show the impact of these aspects, we conduct ablation studies and compare APPNP with its four variants: 1) MLP, where both pseudo-labels and ensemble are removed; 2) APPNP-noa-nof, where weighting of pseudo-labels ($w_{ij}$) is removed; 3) PTS, where dynamic weighting $f_{i,h(j)}$ is removed. 4) APPNP-noe, where the ensemble is removed; The characteristics of these compared methods are presented in Table~\ref{ab_appnp}. 

\begin{table}[t!]
    \centering
    \caption{Characteristics of APPNP and its variants.}
    \resizebox{.45\textwidth}{!}{%
    \begin{tabular}{ccccc}
    \hline
    Method  & \begin{tabular}[c]{@{}c@{}}Pseudo-\\ labels?\end{tabular}  &  \begin{tabular}[c]{@{}c@{}}Graph-based\\ weighting?\end{tabular}   & \begin{tabular}[c]{@{}c@{}}Model-based\\ weighting?\end{tabular}   & Ensemble?    \\
    \hline
    MLP& $\times$ &$\times$&$\times$ & $\times$\\
    APPNP-noa-nof&\checkmark&$\times$&$\times$ & \checkmark \\
    PTS &\checkmark& \checkmark & $\times$ &\checkmark \\
     APPNP-noe&\checkmark&\checkmark&\checkmark & $\times$ \\
    APPNP &\checkmark&\checkmark&\checkmark &\checkmark \\
   
    \hline
    \end{tabular}%
    }
    \label{ab_appnp}
\end{table}

\textbf{Effect of label propagation.} From Table~\ref{ab_appnp_exp}, we can find all the methods with label propagation outperform MLP by a large margin. Specifically, the average accuracy improvement of APPNP (or even if its non-ensemble version APPNP-noe) over MLP on four datasets is 7.40\% (or 5.45\%), which are rather significant.
Moreover, to our surprise, the simple model PTS achieves impressive performance. Although its performance is relatively inferior to APPNP, the values are pretty close. This result validates the major reason of the effectiveness of APPNP is data augmentation with label propagation. 

\begin{table}[t!]
    \centering
    \caption{Performance of APPNP and its variants.}
    \resizebox{.45\textwidth}{!}{%
    \begin{tabular}{ccccc}
    \hline
    Method           & CITESEER     & CORA\_ML     & PUBMED   &  MS\_ACA     \\
    \hline
    MLP&${ 63.98\pm0.44 } $&${68.42\pm0.34}$&$69.47\pm0.47$ & $89.69\pm0.10 $\\
    APPNP-noa-nof&${72.71\pm0.55} $&${78.51\pm0.46}$&$77.18\pm0.53$ & $90.18\pm0.23 $ \\
    PTS & ${ 75.58\pm0.25 } $ & ${  85.02\pm0.24} $&  $  79.67\pm0.28 $ & $92.76\pm0.10$ \\
   APPNP-noe&${ 70.98\pm 0.34 } $&${77.74 \pm 0.27}$&$74.80\pm0.43$ &$89.85 \pm 0.09$  \\
    APPNP &${ 75.48\pm0.29 } $&${85.07\pm0.25}$&$79.61\pm0.33$ &$93.31\pm0.08$  \\
   
    \hline
    \end{tabular}%
    }
    \label{ab_appnp_exp}
\end{table}

\textbf{Effect of weighting.} On the one hand, by comparing PTS with APPNP-noa-nof, we can conclude that weighing the pseudo-labeled data with graph proximity $\bar a_{ji}$ is highly useful. On the other hand, we observe APPNP relatively performs better than PTS as it introduces an additional dynamic weighting $f_{i,h(j)}$, 
which can mitigate adverse effects of structure noise. 

To gain more insight in the effect of dynamic weighting, we conduct an additional experiment to explore the robustness of these methods to the structure noise. Here we first define the \textit{structure noise rates} as the percent of the ``noisy'' edges in the graph, where  the ``noisy'' edges denote the edges whose connected nodes belong to different classes. We randomly transfer some good (noisy) edges into noisy (good) edges to generate simulated graphs with different structure noise rates. The performance of APPNP, PTS, GCN, MLP are presented in Figure~\ref{structure_noise}. For APPNP and PTS, here we choose 2-layer graph propagation for fair comparison with GCN. The black dash line denotes the structure noise rates of the original graph. 

As we can see from Figure~\ref{structure_noise}, when the graph is relative clean (less than 5\%), all the graph-based methods perform pretty well. But it is impractical due to the collected graph are always not so clean. As the structure noise increases, the performance of original GCN drops quickly, which vividly validate the superiority of decoupled GCN over vanilla GCN. Also, we observe the margin achieved by APPNP over PTS become larger with the noise increasing. This experimental result is consistent with our analysis presented in section 3.3 that model-based dynamic weighting indeed improve model's robustness to structure noise. 

More interestingly, when the graph structure noise is large enough (over 50\% in the four data-sets), all GNN-based models (GCN, APPNP, PTS) perform even worse than MLP. 
This phenomenon validates the basic assumption of GCN-based methods is \textit{local homogeneity} \cite{DBLP:conf/icml/YangCS16}, \ie connected nodes in the graph tend to share similar properties and same labels. When the assumption is not hold in the graph, this kind of methods may not be applicable.

\begin{figure}[t!]
    \centering
    \includegraphics[width=0.45\textwidth]{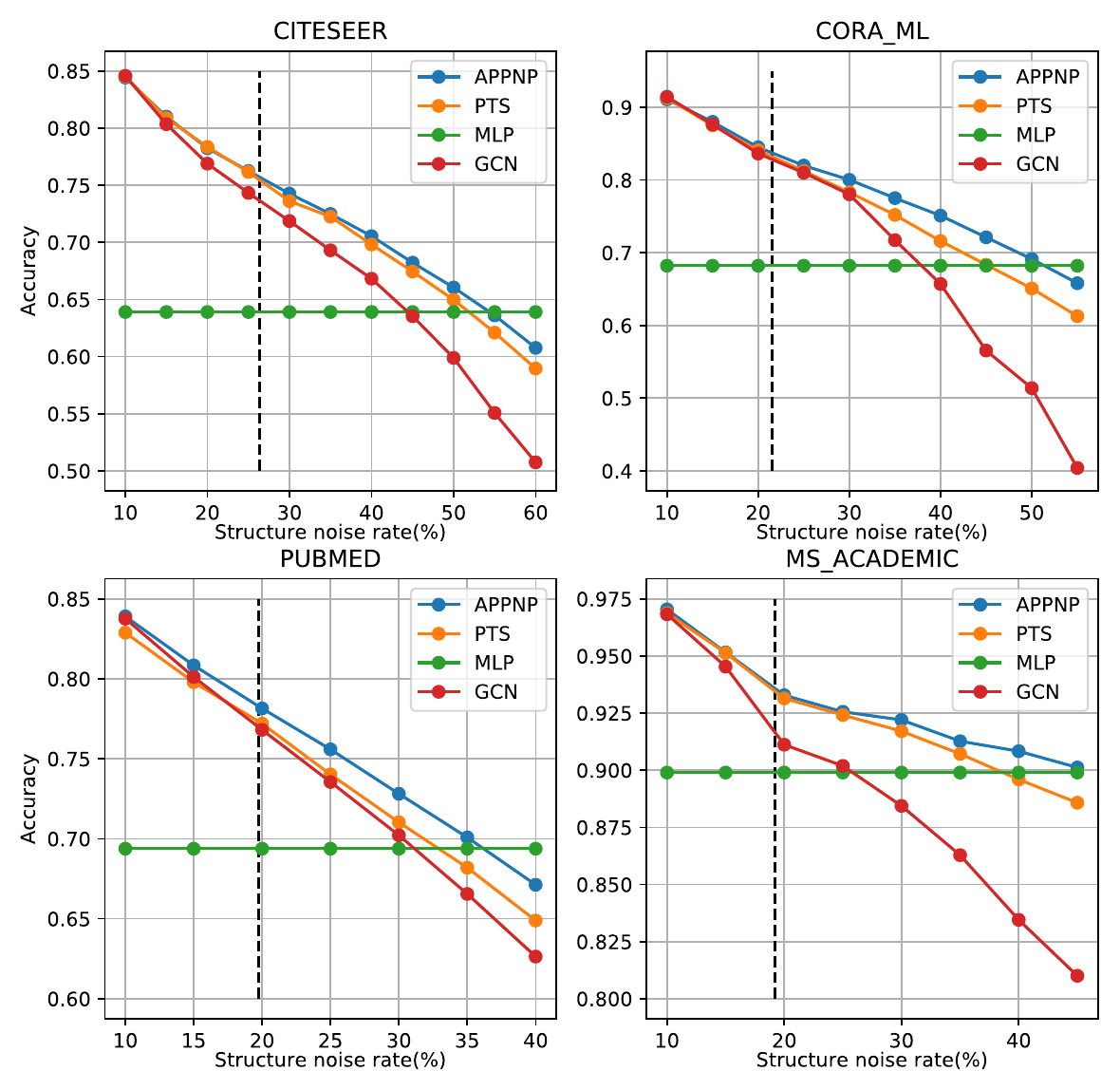}
    \caption{The model robustness to graph structure noise of different models.
    The black dashed line indicates the structure noise rate of the original graph.}
    \label{structure_noise}
\end{figure}

\textbf{Effect of ensemble.} We also observe that APPNP consistently outperforms APPNP-noe. This result validates that ensemble is anther important factor of the effectiveness of decoupled GCN. 

\subsection{Study of Robustness (RQ2)}
In this subsection, we explore how PTA compares with APPNP in terms of robustness to the initialization and label noise. The robustness to structure noise can refer to figure~\ref{structure_noise_2} in Appendix~\ref{PTA_APPNP_str}. 

\textbf{Robustness to initialization.} Figure~\ref{stablity} shows the accuracy distribution of each model is. The smaller boxes and the less outlines suggest the model is more robust to the initialization. From the Figure~\ref{stablity} and Table~\ref{ab_appnp_exp}, We observe that: 1) APPNP, which introduces the dynamic weighting on pseudo-labeled data, is more sensitive to the initialization than PTS. 2) PTA, which adopts the adaptive weighting strategy, is more stable than APPNP and PTD. 3) PTD, where the normalization of weights is removed, is highly unstable. These results validate the necessary and effectiveness of introducing the adaptive weighting strategy. It can reduce the impact of model initialization and foster model's robustness to the noises. 

\begin{figure}[t!]
    \centering
    \includegraphics[width=0.45\textwidth]{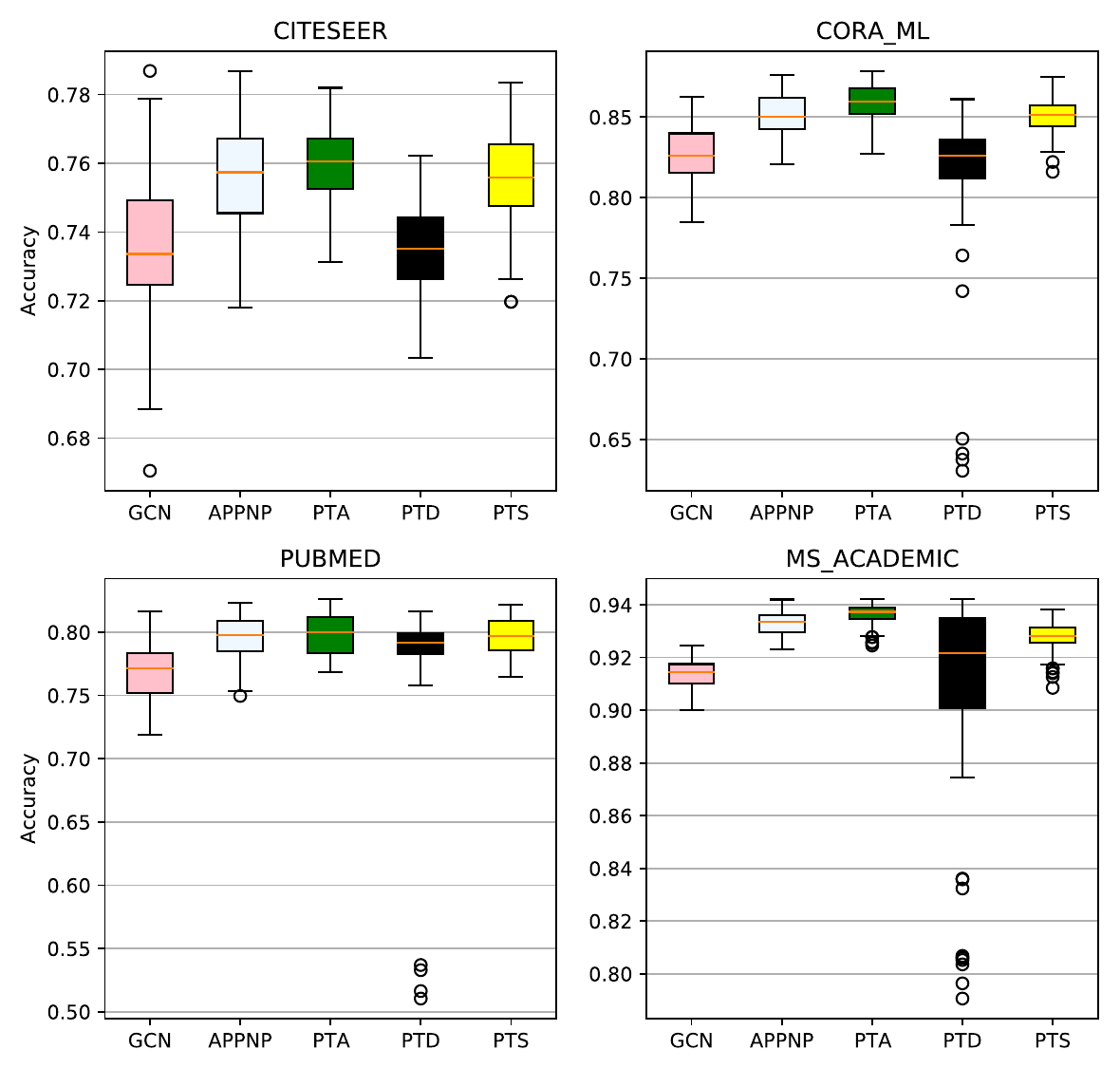}
    \caption{Accuracy distributions of the compared models. }
    \label{stablity}
\end{figure}

\textbf{Robustness to the label noise.} We also conduct an interesting simulated experiments to explore the robustness of the models to the label noise. That is, we randomly transfer a certain number of instances in each class and change their labels to others (``noisy'' labels). We then run different methods on these simulated graph with different ratios of ``noisy'' labels. the result is presented in Figure~\ref{Label_noise}. We observe that: 1) Even there is no label noise, PTA still outperforms others. This result suggests that the equal-contribution assumption is invalid. Different labeled data usually has different quality and importance on learning a neural network predictor. 2) The margins achieved by PTA over APPNP and PTA become larger with the noise increasing. This result is consistent with our analysis presented in section 4 that removing normalization of weights indeed boosts model's robustness to the label noise. 

\begin{figure}[t!]
    \centering
    \includegraphics[width=0.45\textwidth]{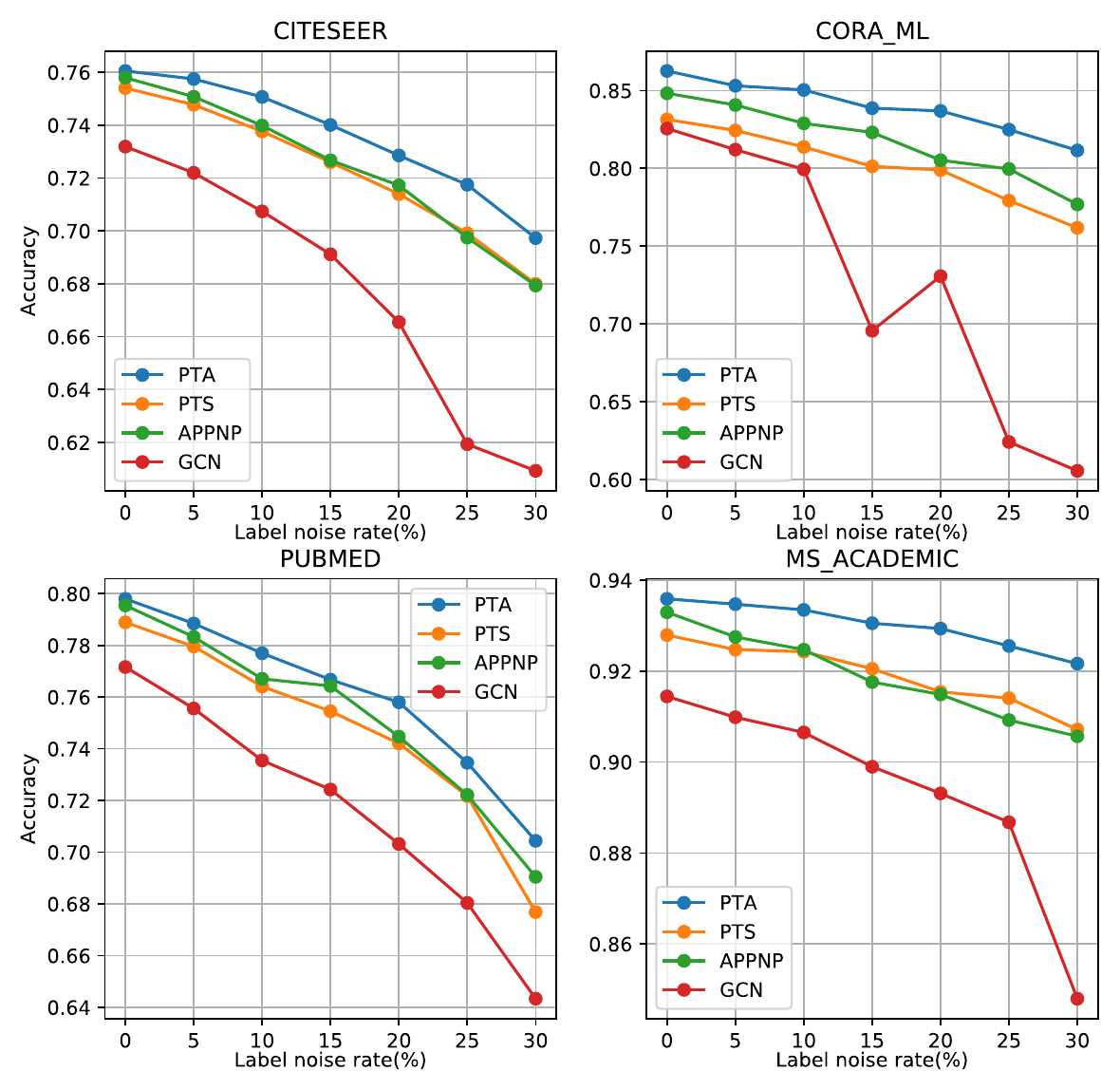}
    \caption{The model robustness to the label noise.}
    \label{Label_noise}
\end{figure}

\subsection{Performance Comparison with State-of-the-Arts (RQ3)}
Table~\ref{accuracy} presents the performance of compared methods in terms of accuracy. The boldface font denotes the winner in that column. For the important baseline APPNP, we present both the results that are reported in the original paper with mark '\#', and our reproduced results. They may have slight difference due to random initialization and random data splits. To ensure the statistical robustness of our experimental setup, we calculate confidence intervals via bootstrapping and report the p-values of a paired t-test between PTA and APPNP. From the table, we have the following observations: 

Overall, PTA outperforms all compared methods on all datasets. This result validates our proposed PTA benefits to train a better neural network predictor. Specifically, comparing with APPNP, the best baseline in general, the improvements of PTA over APPNP is statistically significant with paired t-test at $p < 0.05$ on all datasets.

\begin{table}[t!]
    \centering
    \caption{Accuracy of our PTA comparing with state-of-the-art methods.
    The $p$-value in the last row is obtained via a paired $t$-test between PTA and APPNP.}
    \resizebox{.45\textwidth}{!}{%
    \begin{tabular}{ccccc}
    \hline
    Method            & CITESEER     & CORA\_ML     & PUBMED   &  MS\_ACA    \\
    \hline
    MLP&${ 63.98\pm0.44 } $&${68.42\pm0.34}$&$69.47\pm0.47$ & $89.69\pm0.10 $ \\
    GCN&${ 73.62\pm0.39 } $&${82.70\pm0.39}$&$76.84\pm0.44$ & $91.39\pm0.10 $ \\
    SGCN&${ 75.57\pm0.28 } $&${75.97\pm0.72}$&$71.24\pm0.86$ &$ 91.03\pm0.16 $ \\
    APPNP \#& $75.73\pm0.30$ & $85.09\pm0.25$ & $79.73\pm0.31$ & $93.27\pm0.08$ \\
    DAGNN &${ 74.53\pm0.38 } $&${85.75\pm0.23}$&$79.59\pm0.37$ &$92.29\pm0.07 $ \\
    APPNP&${ 75.48\pm0.29 } $&${85.07\pm0.25}$&$79.61\pm0.33$ &$93.31\pm0.08$  \\
    PTA&$\bm{75.98\pm0.24}$&$\bm{85.90\pm0.21}$&$\bm{79.89\pm0.31}$&$\bm{93.64\pm 0.08}$\\
    \hline
    p-value&$5.56\times 10^{-4}$&$1.81\times10^{-9}$&$1.09\times10^{-2}$&$1.57\times10^{-8}$\\
    \hline
    \end{tabular}%
    }
    \label{accuracy}
\end{table}


\subsection{Efficiency Comparison (RQ4)}
In this section, we empirically compare the efficiency of PTA and APPNP.  Table~\ref{p_time} shows the running time of PTA and APPNP in each epoch, while Table~\ref{t_time} shows the total time cost on training the two models. Note that estimating performance of PTA on the early-stopping set is time-consuming, we further design a fast mode of PTA (PTA(F)), which directly use $f_{\theta}(x)$ instead of ensemble results for early-stopping estimation. The performance of PTA(F) comparing with PTA and APPNP is presented in Table~\ref{accuracy_fast}. From the tables, We can conclude that PTA is much faster than APPNP: on average, about 9.7 times acceleration per epoch and 5.7 times acceleration for totally running time. When we use fast mode (PTA(F)), although its performance would decay slightly (-0.28\% on average), it still outperforms APPNP and achieves impressive speed-up (about 43 times over APPNP and 7.5 times over PTA).

\begin{table}[t]
    \centering
    \caption{The training time per epoch of PTA and APPNP. } 
    \begin{tabular}{ccccc}
    \hline
    Method            & CITESEER     & CORA\_ML     & PUBMED   &  MS\_ACA    \\
    \hline
    APPNP        & 34.73ms & 28.60ms &34.98ms & 30.51ms \\
   PTA         &  $\bm{3.33ms } $   & $\bm{  3.35ms } $&  $ \bm{3.27ms} $ &$\bm{3.33ms} $ \\
    \hline
    \end{tabular}
    \label{p_time}
\end{table}

\begin{table}[t]
    \centering
    \caption{The total training time of PTA and APPNP. 
    }
    \begin{tabular}{ccccc}
    \hline
    Method            & CITESEER     & CORA\_ML     & PUBMED   &  MS\_ACA    \\
    \hline
    APPNP        & 52.75s & 75.30s & 49.39s & 134.23s \\
   PTA         &  $10.14s $   & $11.95s$&  $  10.59s $ & 17.12s  \\
PTA(F)    & $\bm{1.19s} $ & $\bm{ 1.25s }$ & $ \bm{1.40s}$ &$ \bm{3.92s}$ \\
    \hline
    \end{tabular}
    \label{t_time}
\end{table}

\begin{table}[t!]
    \centering
    \caption{The accuracy of PTA(F).}
    \resizebox{.45\textwidth}{!}{%
    \begin{tabular}{ccccc}
    \hline
    Method            & CITESEER     & CORA\_ML     & PUBMED   &  MS\_ACA    \\
    \hline
    APPNP&${ 75.48\pm0.29 } $&${85.07\pm0.25}$&$79.61\pm0.33$ &$93.31\pm0.08$  \\
    PTA(F) & $75.51\pm0.24$ & $85.73\pm0.22 $& $79.45\pm0.40 $&$93.62\pm0.08$ \\
    PTA&$\bm{75.98\pm0.24}$&$\bm{85.90\pm0.21}$&$\bm{79.89\pm0.31}$&$\bm{93.64\pm 0.08}$\\
    \hline
    \end{tabular}%
    }
    \label{accuracy_fast}
\end{table}

\section{RELATED WORK}
Inspired by the success of the \textit{convolutional neural networks} (CNN) in computer vision ~\cite{DBLP:conf/cvpr/HeZRS16}, CNN has been generalized to graph-structured data with a so-called \textit{graph convolutional neural network} (GCN). There are two lines to understand GCN: From a spectral perspective, the convolutions on the graph can be understood as a filter to remove the noise from graph signals. The first work on this line was presented by Bruna \etal~\cite{DBLP:journals/corr/BrunaZSL13}.  Defferrard \etal~\cite{DBLP:conf/nips/DefferrardBV16} and Kipf \etal~\cite{kipf2017gcn} further propose to simplify graph convolutions with Chebyshev polynomial to avoid expensive computation of graph Laplacian eigenvectors. 
Afterwards, some researchers put forward their understanding of GCN in spatial domain~\cite{DBLP:conf/iclr/XuHLJ19}. 
From the spatial perspective, the convolution in GCN can be analogized with ``patch operator'' which refines the representation of each node by combining the information from its neighbors. This simple and intuitive thinking inspires much work on spatial GCN~\cite{DBLP:conf/iclr/VelickovicCCRLB18, DBLP:conf/iclr/XuHLJ19, NIPS2017_6703, chen2019measuring}, which exhibits attractive efficiency and flexibility, and gradually becomes the mainstream. For example, Velickovic \etal \cite{DBLP:conf/iclr/VelickovicCCRLB18} leveraged attention strategy in GCN, which specifies different weights to different neighbors; Hamilton \cite{NIPS2017_6703} \etal \cite{DBLP:conf/iclr/XuHLJ19} proposed to sample a part of neighbors for model training to make the GCN scale up to large-scale graph; Xu \etal analyzed the expressive power of GCN to capture different graph structures, and further proposed \textit{Graph Isomorphism Network}; Li \etal \cite{DBLP:conf/aaai/LiHW18} validate the GCN is a special form of Laplacian smoothing and show exsting methods may suffer from over-smoothing issue; Chen \etal \cite{chen2019measuring} further explored over-smoothing issue of GCN and propose two strategies MADGap and AdaEdge to address it. Here we just list most related work. There are many other graph neural models. We refer the readers to excellent surveys and monographs for more details \cite{wu2020comprehensive,cai2018comprehensive}. 

Note that \textit{feature transformation} and \textit{neighborhood aggregation} are two important operations in a spatial GCN model. In the original GCN~\cite{kipf2017gcn} and many follow-up models~\cite{Ma2019, DBLP:conf/sigir/Wang0WFC19, sankar2020beyond}, the two operations are coupled, where each operation is companied with the other in a graph convolution layer. In fact, some recent works find such a coupling design is unnecessary and even troublesome. Klicpera \etal~\cite{klicpera_predict_2019} and Liu \etal~\cite{DBLP:conf/kdd/LiuGJ20dagnn} claim that decoupling the two operations permits more deep propagation without leading to over-smoothing; Wu \etal~\cite{DBLP:conf/icml/WuSZFYW19} and He \etal~\cite{DBLP:conf/sigir/0001DWLZ020} empirically validate the simplified decoupled GCN outperforms the vanilla one in terms of both accuracy and efficiency. Despite decoupled GCN attracts increasing attention and has become the last paradigm of GCN, to our best knowledge, none of work has provided deep analysis of working mechanisms, advantages and limitations about it.

Lastly, both label propagation (LP)~\cite{zhu2002learning} and GCN follow the information propagation scheme, which implies they have the same internal foundation. However, their relations have not been investigated. The most relevant work that jointly considers both methods is~\cite{DBLP:journals/corr/abs-2010-13993}, which trains a base predictor on the labeled data and then corrects it by information propagation on the graph. The angle of this work differs from ours significantly. 
In this work, we prove that decoupled GCN is identical to the \textit{Propagation then Training}, which to our knowledge is the first work that reveals the essential relation between LP and GCN.

\section{Conclusion}

In this work, we conduct thorough theoretical analyses on decoupled GCN and prove its training stage is essentially equivalent to performing \textit{Propagation then Training}. This novel view of label propagation reveals the reasons of the effectiveness of decoupled GCN: 1) data augmentation through label propagation; 2) structure- and model- aware weighting for the pseudo-labeled data; and 3) combining the predictions of neighbors. In addition to the advantages, we also identify two limitations of decoupled GCN --- sensitive to model initialization and to label noise. Based on these insight, we further propose a new method \textit{Propagation then Training adaptive} (PTA), which posters the advantages of decoupled GCN and overcomes its weaknesses by introducing an adaptive weighting strategy. Empirical studies on four node classification datasets validate the superiority of the proposed PTA over decoupled GCNs in all robustness, accuracy, and efficiency.

We believe the insights brought by the label propagation view are inspiration for future research and application of GCN. Here we point out three directions. First, our analyses focus on the semi-supervised node-classification setting. How to extend to other tasks like link prediction and graph classification is interesting and valuable to explore. Second, this work provides a new view of GCN. It will be useful to explore existing methods from this perspective and analyze their pros and cons, which are instructive to develop better models. Third, our proposed PTA uses a relatively simple weighting strategies. More sophisticated weighting strategy can be explored, such as learning from side information, graph global topology, validation data, or employing adversarial learning for better robustness.

\bibliographystyle{ACM-Reference-Format}
\bibliography{main}


\begin{thebibliography}{45}


\ifx \showCODEN    \undefined \def \showCODEN     #1{\unskip}     \fi
\ifx \showDOI      \undefined \def \showDOI       #1{#1}\fi
\ifx \showISBNx    \undefined \def \showISBNx     #1{\unskip}     \fi
\ifx \showISBNxiii \undefined \def \showISBNxiii  #1{\unskip}     \fi
\ifx \showISSN     \undefined \def \showISSN      #1{\unskip}     \fi
\ifx \showLCCN     \undefined \def \showLCCN      #1{\unskip}     \fi
\ifx \shownote     \undefined \def \shownote      #1{#1}          \fi
\ifx \showarticletitle \undefined \def \showarticletitle #1{#1}   \fi
\ifx \showURL      \undefined \def \showURL       {\relax}        \fi
\providecommand\bibfield[2]{#2}
\providecommand\bibinfo[2]{#2}
\providecommand\natexlab[1]{#1}
\providecommand\showeprint[2][]{arXiv:#2}

\bibitem[\protect\citeauthoryear{Abu{-}El{-}Haija, Kapoor, Perozzi, and
  Lee}{Abu{-}El{-}Haija et~al\mbox{.}}{2019}]%
        {DBLP:conf/uai/Abu-El-HaijaKPL19}
\bibfield{author}{\bibinfo{person}{Sami Abu{-}El{-}Haija},
  \bibinfo{person}{Amol Kapoor}, \bibinfo{person}{Bryan Perozzi}, {and}
  \bibinfo{person}{Joonseok Lee}.} \bibinfo{year}{2019}\natexlab{}.
\newblock \showarticletitle{{N-GCN:} Multi-scale Graph Convolution for
  Semi-supervised Node Classification}. In \bibinfo{booktitle}{\emph{the
  Conference on Uncertainty in Artificial Intelligence, {UAI}, 2019}}.
\newblock


\bibitem[\protect\citeauthoryear{Bruna, Zaremba, Szlam, and LeCun}{Bruna
  et~al\mbox{.}}{2014}]%
        {DBLP:journals/corr/BrunaZSL13}
\bibfield{author}{\bibinfo{person}{Joan Bruna}, \bibinfo{person}{Wojciech
  Zaremba}, \bibinfo{person}{Arthur Szlam}, {and} \bibinfo{person}{Yann
  LeCun}.} \bibinfo{year}{2014}\natexlab{}.
\newblock \showarticletitle{Spectral Networks and Locally Connected Networks on
  Graphs}. In \bibinfo{booktitle}{\emph{the International Conference on
  Learning Representations, {ICLR}, 2014}}.
\newblock


\bibitem[\protect\citeauthoryear{Cai, Zheng, and Chang}{Cai
  et~al\mbox{.}}{2018}]%
        {cai2018comprehensive}
\bibfield{author}{\bibinfo{person}{Hongyun Cai}, \bibinfo{person}{Vincent~W
  Zheng}, {and} \bibinfo{person}{Kevin Chen-Chuan Chang}.}
  \bibinfo{year}{2018}\natexlab{}.
\newblock \showarticletitle{A comprehensive survey of graph embedding:
  Problems, techniques, and applications}.
\newblock \bibinfo{journal}{\emph{IEEE Transactions on Knowledge and Data
  Engineering, TKDE}} (\bibinfo{year}{2018}).
\newblock


\bibitem[\protect\citeauthoryear{Chen, Lin, Li, Li, Zhou, and Sun}{Chen
  et~al\mbox{.}}{2020}]%
        {chen2019measuring}
\bibfield{author}{\bibinfo{person}{Deli Chen}, \bibinfo{person}{Yankai Lin},
  \bibinfo{person}{Wei Li}, \bibinfo{person}{Peng Li}, \bibinfo{person}{Jie
  Zhou}, {and} \bibinfo{person}{Xu Sun}.} \bibinfo{year}{2020}\natexlab{}.
\newblock \showarticletitle{Measuring and Relieving the Over-smoothing Problem
  for Graph Neural Networks from the Topological View}.
\newblock \bibinfo{journal}{\emph{AAAI Conference on Artificial Intelligence,
  AAAI}} (\bibinfo{year}{2020}).
\newblock


\bibitem[\protect\citeauthoryear{Chen, Gu, Ren, He, Xie, Guo, Yin, and
  Zhang}{Chen et~al\mbox{.}}{2019}]%
        {DBLP:conf/ijcai/ChenGRHXGYZ19}
\bibfield{author}{\bibinfo{person}{Weijian Chen}, \bibinfo{person}{Yulong Gu},
  \bibinfo{person}{Zhaochun Ren}, \bibinfo{person}{Xiangnan He},
  \bibinfo{person}{Hongtao Xie}, \bibinfo{person}{Tong Guo},
  \bibinfo{person}{Dawei Yin}, {and} \bibinfo{person}{Yongdong Zhang}.}
  \bibinfo{year}{2019}\natexlab{}.
\newblock \showarticletitle{Semi-supervised User Profiling with Heterogeneous
  Graph Attention Networks}. In \bibinfo{booktitle}{\emph{the International
  Joint Conference on Artificial Intelligence, {IJCAI} 2019}}.
\newblock


\bibitem[\protect\citeauthoryear{Defferrard, Bresson, and
  Vandergheynst}{Defferrard et~al\mbox{.}}{2016}]%
        {DBLP:conf/nips/DefferrardBV16}
\bibfield{author}{\bibinfo{person}{Micha{\"{e}}l Defferrard},
  \bibinfo{person}{Xavier Bresson}, {and} \bibinfo{person}{Pierre
  Vandergheynst}.} \bibinfo{year}{2016}\natexlab{}.
\newblock \showarticletitle{Convolutional Neural Networks on Graphs with Fast
  Localized Spectral Filtering}. In \bibinfo{booktitle}{\emph{Advances in
  Neural Information Processing Systems, NIPS, 2016}}.
\newblock


\bibitem[\protect\citeauthoryear{Frasca, Rossi, Eynard, Chamberlain, Bronstein,
  and Monti}{Frasca et~al\mbox{.}}{2020}]%
        {sign_icml_grl2020}
\bibfield{author}{\bibinfo{person}{Fabrizio Frasca}, \bibinfo{person}{Emanuele
  Rossi}, \bibinfo{person}{Davide Eynard}, \bibinfo{person}{Benjamin
  Chamberlain}, \bibinfo{person}{Michael Bronstein}, {and}
  \bibinfo{person}{Federico Monti}.} \bibinfo{year}{2020}\natexlab{}.
\newblock \showarticletitle{SIGN: Scalable Inception Graph Neural Networks}. In
  \bibinfo{booktitle}{\emph{International Conference on Machine Learning,
  {ICML} 2020}}.
\newblock


\bibitem[\protect\citeauthoryear{Garg, Jegelka, and Jaakkola}{Garg
  et~al\mbox{.}}{2020}]%
        {DBLP:journals/Vikas/Generalization}
\bibfield{author}{\bibinfo{person}{Vikas~K. Garg}, \bibinfo{person}{Stefanie
  Jegelka}, {and} \bibinfo{person}{Tommi~S. Jaakkola}.}
  \bibinfo{year}{2020}\natexlab{}.
\newblock \showarticletitle{Generalization and Representational Limits of Graph
  Neural Networks}. In \bibinfo{booktitle}{\emph{International Conference on
  Machine Learning, ICML, 2020}}.
\newblock


\bibitem[\protect\citeauthoryear{Hamilton, Ying, and Leskovec}{Hamilton
  et~al\mbox{.}}{2017}]%
        {NIPS2017_6703}
\bibfield{author}{\bibinfo{person}{Will Hamilton}, \bibinfo{person}{Zhitao
  Ying}, {and} \bibinfo{person}{Jure Leskovec}.}
  \bibinfo{year}{2017}\natexlab{}.
\newblock \showarticletitle{Inductive Representation Learning on Large Graphs}.
  In \bibinfo{booktitle}{\emph{Advances in Neural Information Processing
  Systems, NIPS, 2017}}.
\newblock


\bibitem[\protect\citeauthoryear{He, Zhang, Ren, and Sun}{He
  et~al\mbox{.}}{2016}]%
        {DBLP:conf/cvpr/HeZRS16}
\bibfield{author}{\bibinfo{person}{Kaiming He}, \bibinfo{person}{Xiangyu
  Zhang}, \bibinfo{person}{Shaoqing Ren}, {and} \bibinfo{person}{Jian Sun}.}
  \bibinfo{year}{2016}\natexlab{}.
\newblock \showarticletitle{Deep Residual Learning for Image Recognition}. In
  \bibinfo{booktitle}{\emph{Conference on Computer Vision and Pattern
  Recognition,{CVPR}, 2016}}.
\newblock


\bibitem[\protect\citeauthoryear{He, Deng, Wang, Li, Zhang, and Wang}{He
  et~al\mbox{.}}{2020}]%
        {DBLP:conf/sigir/0001DWLZ020}
\bibfield{author}{\bibinfo{person}{Xiangnan He}, \bibinfo{person}{Kuan Deng},
  \bibinfo{person}{Xiang Wang}, \bibinfo{person}{Yan Li},
  \bibinfo{person}{Yong{-}Dong Zhang}, {and} \bibinfo{person}{Meng Wang}.}
  \bibinfo{year}{2020}\natexlab{}.
\newblock \showarticletitle{LightGCN: Simplifying and Powering Graph
  Convolution Network for Recommendation}. In
  \bibinfo{booktitle}{\emph{International {ACM} {SIGIR} conference on research
  and development in Information Retrieval, {SIGIR} 2020}}.
\newblock


\bibitem[\protect\citeauthoryear{Huang, He, Singh, Lim, and Benson}{Huang
  et~al\mbox{.}}{2021}]%
        {DBLP:journals/corr/abs-2010-13993}
\bibfield{author}{\bibinfo{person}{Qian Huang}, \bibinfo{person}{Horace He},
  \bibinfo{person}{Abhay Singh}, \bibinfo{person}{Ser{-}Nam Lim}, {and}
  \bibinfo{person}{Austin~R. Benson}.} \bibinfo{year}{2021}\natexlab{}.
\newblock \showarticletitle{Combining Label Propagation and Simple Models
  Out-performs Graph Neural Networks}. In \bibinfo{booktitle}{\emph{the
  International Conference on Learning Representations, {ICLR} 2021}}.
\newblock


\bibitem[\protect\citeauthoryear{Jin, Liu, Li, He, and Zhang}{Jin
  et~al\mbox{.}}{2019}]%
        {jin2019graph}
\bibfield{author}{\bibinfo{person}{Di Jin}, \bibinfo{person}{Ziyang Liu},
  \bibinfo{person}{Weihao Li}, \bibinfo{person}{Dongxiao He}, {and}
  \bibinfo{person}{Weixiong Zhang}.} \bibinfo{year}{2019}\natexlab{}.
\newblock \showarticletitle{Graph convolutional networks meet markov random
  fields: Semi-supervised community detection in attribute networks}. In
  \bibinfo{booktitle}{\emph{the AAAI Conference on Artificial Intelligence,
  AAAI, 2019}}.
\newblock


\bibitem[\protect\citeauthoryear{Kingma and Ba}{Kingma and Ba}{2015}]%
        {DBLP:journals/corr/KingmaB14}
\bibfield{author}{\bibinfo{person}{Diederik~P. Kingma} {and}
  \bibinfo{person}{Jimmy Ba}.} \bibinfo{year}{2015}\natexlab{}.
\newblock \showarticletitle{Adam: {A} Method for Stochastic Optimization}. In
  \bibinfo{booktitle}{\emph{International Conference on Learning
  Representations, {ICLR},2015}}.
\newblock


\bibitem[\protect\citeauthoryear{Kipf and Welling}{Kipf and Welling}{2017}]%
        {kipf2017gcn}
\bibfield{author}{\bibinfo{person}{Thomas~N. Kipf} {and} \bibinfo{person}{Max
  Welling}.} \bibinfo{year}{2017}\natexlab{}.
\newblock \showarticletitle{Semi-Supervised Classification with Graph
  Convolutional Networks}. In \bibinfo{booktitle}{\emph{International
  Conference on Learning Representations, ICLR, 2017}}.
\newblock


\bibitem[\protect\citeauthoryear{Klicpera, Bojchevski, and
  G{\"u}nnemann}{Klicpera et~al\mbox{.}}{2019}]%
        {klicpera_predict_2019}
\bibfield{author}{\bibinfo{person}{Johannes Klicpera},
  \bibinfo{person}{Aleksandar Bojchevski}, {and} \bibinfo{person}{Stephan
  G{\"u}nnemann}.} \bibinfo{year}{2019}\natexlab{}.
\newblock \showarticletitle{Predict then Propagate: Graph Neural Networks meet
  Personalized PageRank}. In \bibinfo{booktitle}{\emph{International Conference
  on Learning Representations, ICLR, 2019}}.
\newblock


\bibitem[\protect\citeauthoryear{Knyazev, Taylor, and Amer}{Knyazev
  et~al\mbox{.}}{2019}]%
        {DBLP:conf/nips/KnyazevTA19}
\bibfield{author}{\bibinfo{person}{Boris Knyazev}, \bibinfo{person}{Graham~W.
  Taylor}, {and} \bibinfo{person}{Mohamed~R. Amer}.}
  \bibinfo{year}{2019}\natexlab{}.
\newblock \showarticletitle{Understanding Attention and Generalization in Graph
  Neural Networks}. In \bibinfo{booktitle}{\emph{Conference on Neural
  Information Processing Systems, NIPS, 2019}}.
\newblock


\bibitem[\protect\citeauthoryear{Li, Han, and Wu}{Li et~al\mbox{.}}{2018}]%
        {DBLP:conf/aaai/LiHW18}
\bibfield{author}{\bibinfo{person}{Qimai Li}, \bibinfo{person}{Zhichao Han},
  {and} \bibinfo{person}{Xiao{-}Ming Wu}.} \bibinfo{year}{2018}\natexlab{}.
\newblock \showarticletitle{Deeper Insights Into Graph Convolutional Networks
  for Semi-Supervised Learning}. In \bibinfo{booktitle}{\emph{AAAI Conference
  on Artificial Intelligence, AAAI, 2018}}.
\newblock


\bibitem[\protect\citeauthoryear{Liu, Gao, and Ji}{Liu et~al\mbox{.}}{2020}]%
        {DBLP:conf/kdd/LiuGJ20dagnn}
\bibfield{author}{\bibinfo{person}{Meng Liu}, \bibinfo{person}{Hongyang Gao},
  {and} \bibinfo{person}{Shuiwang Ji}.} \bibinfo{year}{2020}\natexlab{}.
\newblock \showarticletitle{Towards Deeper Graph Neural Networks}. In
  \bibinfo{booktitle}{\emph{The {ACM} {SIGKDD} Conference on Knowledge
  Discovery and Data Mining, KDD, 2020}}.
\newblock


\bibitem[\protect\citeauthoryear{Ma, Cui, Kuang, Wang, and Zhu}{Ma
  et~al\mbox{.}}{2019}]%
        {Ma2019}
\bibfield{author}{\bibinfo{person}{Jianxin Ma}, \bibinfo{person}{Peng Cui},
  \bibinfo{person}{Kun Kuang}, \bibinfo{person}{Xin Wang}, {and}
  \bibinfo{person}{Wenwu Zhu}.} \bibinfo{year}{2019}\natexlab{}.
\newblock \showarticletitle{{Disentangled Graph Convolutional Networks}}.
\newblock \bibinfo{journal}{\emph{International Conference on Machine Learning,
  {ICML}}} (\bibinfo{year}{2019}).
\newblock


\bibitem[\protect\citeauthoryear{McCallum, Nigam, Rennie, and Seymore}{McCallum
  et~al\mbox{.}}{2000}]%
        {mccallum2000automating}
\bibfield{author}{\bibinfo{person}{Andrew~Kachites McCallum},
  \bibinfo{person}{Kamal Nigam}, \bibinfo{person}{Jason Rennie}, {and}
  \bibinfo{person}{Kristie Seymore}.} \bibinfo{year}{2000}\natexlab{}.
\newblock \showarticletitle{Automating the construction of internet portals
  with machine learning}.
\newblock \bibinfo{journal}{\emph{Information Retrieval}}
  (\bibinfo{year}{2000}).
\newblock


\bibitem[\protect\citeauthoryear{Murphy}{Murphy}{2012}]%
        {murphy2012machine}
\bibfield{author}{\bibinfo{person}{Kevin~P Murphy}.}
  \bibinfo{year}{2012}\natexlab{}.
\newblock \bibinfo{booktitle}{\emph{Machine learning: a probabilistic
  perspective}}.
\newblock \bibinfo{publisher}{MIT press}.
\newblock


\bibitem[\protect\citeauthoryear{Namata, London, Getoor, Huang, and EDU}{Namata
  et~al\mbox{.}}{2012}]%
        {namata2012query}
\bibfield{author}{\bibinfo{person}{Galileo Namata}, \bibinfo{person}{Ben
  London}, \bibinfo{person}{Lise Getoor}, \bibinfo{person}{Bert Huang}, {and}
  \bibinfo{person}{UMD EDU}.} \bibinfo{year}{2012}\natexlab{}.
\newblock \showarticletitle{Query-driven active surveying for collective
  classification}. In \bibinfo{booktitle}{\emph{International Workshop on
  Mining and Learning with Graphs, 2012}}.
\newblock


\bibitem[\protect\citeauthoryear{Page, Brin, Motwani, and Winograd}{Page
  et~al\mbox{.}}{1999}]%
        {ilprints422}
\bibfield{author}{\bibinfo{person}{Lawrence Page}, \bibinfo{person}{Sergey
  Brin}, \bibinfo{person}{Rajeev Motwani}, {and} \bibinfo{person}{Terry
  Winograd}.} \bibinfo{year}{1999}\natexlab{}.
\newblock \bibinfo{booktitle}{\emph{The PageRank Citation Ranking: Bringing
  Order to the Web.}}
\newblock \bibinfo{type}{{T}echnical {R}eport}. \bibinfo{institution}{Stanford
  InfoLab}.
\newblock


\bibitem[\protect\citeauthoryear{Sankar, Wang, Krishnan, and Sundaram}{Sankar
  et~al\mbox{.}}{2020}]%
        {sankar2020beyond}
\bibfield{author}{\bibinfo{person}{Aravind Sankar}, \bibinfo{person}{Junting
  Wang}, \bibinfo{person}{Adit Krishnan}, {and} \bibinfo{person}{Hari
  Sundaram}.} \bibinfo{year}{2020}\natexlab{}.
\newblock \showarticletitle{Beyond Localized Graph Neural Networks: An
  Attributed Motif Regularization Framework}. In \bibinfo{booktitle}{\emph{the
  International Conference on Data Mining, {ICDM} 2020}}.
\newblock


\bibitem[\protect\citeauthoryear{Scardapane, Spinelli, and Lorenzo}{Scardapane
  et~al\mbox{.}}{2021}]%
        {DBLP:journals/tsipn/ScardapaneSL21}
\bibfield{author}{\bibinfo{person}{Simone Scardapane}, \bibinfo{person}{Indro
  Spinelli}, {and} \bibinfo{person}{Paolo~Di Lorenzo}.}
  \bibinfo{year}{2021}\natexlab{}.
\newblock \showarticletitle{Distributed Training of Graph Convolutional
  Networks}.
\newblock \bibinfo{journal}{\emph{{IEEE} Trans. Signal Inf. Process. over
  Networks}} (\bibinfo{year}{2021}).
\newblock


\bibitem[\protect\citeauthoryear{Sen, Namata, Bilgic, Getoor, Galligher, and
  Eliassi-Rad}{Sen et~al\mbox{.}}{2008}]%
        {sen2008collective}
\bibfield{author}{\bibinfo{person}{Prithviraj Sen}, \bibinfo{person}{Galileo
  Namata}, \bibinfo{person}{Mustafa Bilgic}, \bibinfo{person}{Lise Getoor},
  \bibinfo{person}{Brian Galligher}, {and} \bibinfo{person}{Tina Eliassi-Rad}.}
  \bibinfo{year}{2008}\natexlab{}.
\newblock \showarticletitle{Collective classification in network data}.
\newblock \bibinfo{journal}{\emph{AI magazine}} (\bibinfo{year}{2008}).
\newblock


\bibitem[\protect\citeauthoryear{Shchur, Mumme, Bojchevski, and
  G{\"u}nnemann}{Shchur et~al\mbox{.}}{2018}]%
        {shchur2018pitfalls}
\bibfield{author}{\bibinfo{person}{Oleksandr Shchur},
  \bibinfo{person}{Maximilian Mumme}, \bibinfo{person}{Aleksandar Bojchevski},
  {and} \bibinfo{person}{Stephan G{\"u}nnemann}.}
  \bibinfo{year}{2018}\natexlab{}.
\newblock \showarticletitle{Pitfalls of Graph Neural Network Evaluation}. In
  \bibinfo{booktitle}{\emph{Relational Representation Learning Workshop, R2L,
  2018}}.
\newblock


\bibitem[\protect\citeauthoryear{Tian, Ma, Yang, and Dai}{Tian
  et~al\mbox{.}}{2020}]%
        {DBLP:conf/ipps/TianMYD20}
\bibfield{author}{\bibinfo{person}{Chao Tian}, \bibinfo{person}{Lingxiao Ma},
  \bibinfo{person}{Zhi Yang}, {and} \bibinfo{person}{Yafei Dai}.}
  \bibinfo{year}{2020}\natexlab{}.
\newblock \showarticletitle{{PCGCN:} Partition-Centric Processing for
  Accelerating Graph Convolutional Network}. In \bibinfo{booktitle}{\emph{2020
  {IEEE} International Parallel and Distributed Processing Symposium IPDPS,
  2020}}.
\newblock


\bibitem[\protect\citeauthoryear{Velickovic, Cucurull, Casanova, Romero,
  Li{\`{o}}, and Bengio}{Velickovic et~al\mbox{.}}{2018}]%
        {DBLP:conf/iclr/VelickovicCCRLB18}
\bibfield{author}{\bibinfo{person}{Petar Velickovic}, \bibinfo{person}{Guillem
  Cucurull}, \bibinfo{person}{Arantxa Casanova}, \bibinfo{person}{Adriana
  Romero}, \bibinfo{person}{Pietro Li{\`{o}}}, {and} \bibinfo{person}{Yoshua
  Bengio}.} \bibinfo{year}{2018}\natexlab{}.
\newblock \showarticletitle{Graph Attention Networks}. In
  \bibinfo{booktitle}{\emph{International Conference on Learning
  Representations, {ICLR}, 2018}}.
\newblock


\bibitem[\protect\citeauthoryear{Wang, He, Wang, Feng, and Chua}{Wang
  et~al\mbox{.}}{2019}]%
        {DBLP:conf/sigir/Wang0WFC19}
\bibfield{author}{\bibinfo{person}{Xiang Wang}, \bibinfo{person}{Xiangnan He},
  \bibinfo{person}{Meng Wang}, \bibinfo{person}{Fuli Feng}, {and}
  \bibinfo{person}{Tat{-}Seng Chua}.} \bibinfo{year}{2019}\natexlab{}.
\newblock \showarticletitle{Neural Graph Collaborative Filtering}. In
  \bibinfo{booktitle}{\emph{International {ACM} {SIGIR} Conference on Research
  and Development in Information Retrieval, {SIGIR}, 2019}}.
\newblock


\bibitem[\protect\citeauthoryear{Wu, Jr., Zhang, Fifty, Yu, and Weinberger}{Wu
  et~al\mbox{.}}{2019}]%
        {DBLP:conf/icml/WuSZFYW19}
\bibfield{author}{\bibinfo{person}{Felix Wu}, \bibinfo{person}{Amauri H.~Souza
  Jr.}, \bibinfo{person}{Tianyi Zhang}, \bibinfo{person}{Christopher Fifty},
  \bibinfo{person}{Tao Yu}, {and} \bibinfo{person}{Kilian~Q. Weinberger}.}
  \bibinfo{year}{2019}\natexlab{}.
\newblock \showarticletitle{Simplifying Graph Convolutional Networks}. In
  \bibinfo{booktitle}{\emph{International Conference on Machine Learning,
  {ICML}, 2019}}.
\newblock


\bibitem[\protect\citeauthoryear{Wu, Pan, Chen, Long, Zhang, and Philip}{Wu
  et~al\mbox{.}}{2020}]%
        {wu2020comprehensive}
\bibfield{author}{\bibinfo{person}{Zonghan Wu}, \bibinfo{person}{Shirui Pan},
  \bibinfo{person}{Fengwen Chen}, \bibinfo{person}{Guodong Long},
  \bibinfo{person}{Chengqi Zhang}, {and} \bibinfo{person}{S~Yu Philip}.}
  \bibinfo{year}{2020}\natexlab{}.
\newblock \showarticletitle{A comprehensive survey on graph neural networks}.
\newblock \bibinfo{journal}{\emph{IEEE Transactions on Neural Networks and
  Learning Systems}} (\bibinfo{year}{2020}).
\newblock


\bibitem[\protect\citeauthoryear{Xiaojin and Zoubin}{Xiaojin and
  Zoubin}{2002}]%
        {zhu2002learning}
\bibfield{author}{\bibinfo{person}{Zhu Xiaojin} {and}
  \bibinfo{person}{Ghahramani Zoubin}.} \bibinfo{year}{2002}\natexlab{}.
\newblock \showarticletitle{Learning from labeled and unlabeled data with label
  propagation}.
\newblock \bibinfo{journal}{\emph{Technical Report, Carnegie Mellon
  University}} (\bibinfo{year}{2002}).
\newblock


\bibitem[\protect\citeauthoryear{Xie, Li, Yang, Wong, and Han}{Xie
  et~al\mbox{.}}{2020}]%
        {DBLP:conf/ijcai/XieLYW020}
\bibfield{author}{\bibinfo{person}{Yiqing Xie}, \bibinfo{person}{Sha Li},
  \bibinfo{person}{Carl Yang}, \bibinfo{person}{Raymond~Chi{-}Wing Wong}, {and}
  \bibinfo{person}{Jiawei Han}.} \bibinfo{year}{2020}\natexlab{}.
\newblock \showarticletitle{When Do GNNs Work: Understanding and Improving
  Neighborhood Aggregation}. In \bibinfo{booktitle}{\emph{Proceedings of the
  Twenty-Ninth International Joint Conference on Artificial Intelligence,
  {IJCAI} 2020}}.
\newblock


\bibitem[\protect\citeauthoryear{Xu, Hu, Leskovec, and Jegelka}{Xu
  et~al\mbox{.}}{2019}]%
        {DBLP:conf/iclr/XuHLJ19}
\bibfield{author}{\bibinfo{person}{Keyulu Xu}, \bibinfo{person}{Weihua Hu},
  \bibinfo{person}{Jure Leskovec}, {and} \bibinfo{person}{Stefanie Jegelka}.}
  \bibinfo{year}{2019}\natexlab{}.
\newblock \showarticletitle{How Powerful are Graph Neural Networks?}. In
  \bibinfo{booktitle}{\emph{International Conference on Learning
  Representations, {ICLR}, 2019}}.
\newblock


\bibitem[\protect\citeauthoryear{Xu, Li, Tian, Sonobe, Kawarabayashi, and
  Jegelka}{Xu et~al\mbox{.}}{2018}]%
        {xu2018representation}
\bibfield{author}{\bibinfo{person}{Keyulu Xu}, \bibinfo{person}{Chengtao Li},
  \bibinfo{person}{Yonglong Tian}, \bibinfo{person}{Tomohiro Sonobe},
  \bibinfo{person}{Ken-ichi Kawarabayashi}, {and} \bibinfo{person}{Stefanie
  Jegelka}.} \bibinfo{year}{2018}\natexlab{}.
\newblock \showarticletitle{Representation Learning on Graphs with Jumping
  Knowledge Networks}. In \bibinfo{booktitle}{\emph{International Conference on
  Machine Learning, ICML, 2018}}.
\newblock


\bibitem[\protect\citeauthoryear{Yang, Cohen, and Salakhutdinov}{Yang
  et~al\mbox{.}}{2016}]%
        {DBLP:conf/icml/YangCS16}
\bibfield{author}{\bibinfo{person}{Zhilin Yang}, \bibinfo{person}{William~W.
  Cohen}, {and} \bibinfo{person}{Ruslan Salakhutdinov}.}
  \bibinfo{year}{2016}\natexlab{}.
\newblock \showarticletitle{Revisiting Semi-Supervised Learning with Graph
  Embeddings}. In \bibinfo{booktitle}{\emph{International Conference on Machine
  Learning, ICML, 2016}}.
\newblock


\bibitem[\protect\citeauthoryear{Yao, Mao, and Luo}{Yao et~al\mbox{.}}{2019}]%
        {yao2019graph}
\bibfield{author}{\bibinfo{person}{Liang Yao}, \bibinfo{person}{Chengsheng
  Mao}, {and} \bibinfo{person}{Yuan Luo}.} \bibinfo{year}{2019}\natexlab{}.
\newblock \showarticletitle{Graph convolutional networks for text
  classification}. In \bibinfo{booktitle}{\emph{the AAAI Conference on
  Artificial Intelligence, AAAI, 2019}}.
\newblock


\bibitem[\protect\citeauthoryear{You, Ying, and Leskovec}{You
  et~al\mbox{.}}{2019}]%
        {DBLP:conf/icml/YouYL19}
\bibfield{author}{\bibinfo{person}{Jiaxuan You}, \bibinfo{person}{Rex Ying},
  {and} \bibinfo{person}{Jure Leskovec}.} \bibinfo{year}{2019}\natexlab{}.
\newblock \showarticletitle{Position-aware Graph Neural Networks}. In
  \bibinfo{booktitle}{\emph{the International Conference on Machine Learning,
  {ICML} 2019}}.
\newblock


\bibitem[\protect\citeauthoryear{Zhang and Chen}{Zhang and Chen}{2018}]%
        {DBLP:conf/nips/ZhangC18}
\bibfield{author}{\bibinfo{person}{Muhan Zhang} {and} \bibinfo{person}{Yixin
  Chen}.} \bibinfo{year}{2018}\natexlab{}.
\newblock \showarticletitle{Link Prediction Based on Graph Neural Networks}. In
  \bibinfo{booktitle}{\emph{Neural Information Processing Systems, NeurIPS,
  2018}}.
\newblock


\bibitem[\protect\citeauthoryear{Zhang, Cui, Neumann, and Chen}{Zhang
  et~al\mbox{.}}{2018}]%
        {DBLP:conf/aaai/ZhangCNC18}
\bibfield{author}{\bibinfo{person}{Muhan Zhang}, \bibinfo{person}{Zhicheng
  Cui}, \bibinfo{person}{Marion Neumann}, {and} \bibinfo{person}{Yixin Chen}.}
  \bibinfo{year}{2018}\natexlab{}.
\newblock \showarticletitle{An End-to-End Deep Learning Architecture for Graph
  Classification}. In \bibinfo{booktitle}{\emph{Conference on Artificial
  Intelligence, AAAI, 2018}}.
\newblock


\bibitem[\protect\citeauthoryear{Zhou, Bousquet, Lal, Weston, and
  Sch{\"o}lkopf}{Zhou et~al\mbox{.}}{2004}]%
        {zhou2004learning}
\bibfield{author}{\bibinfo{person}{Dengyong Zhou}, \bibinfo{person}{Olivier
  Bousquet}, \bibinfo{person}{Thomas~N Lal}, \bibinfo{person}{Jason Weston},
  {and} \bibinfo{person}{Bernhard Sch{\"o}lkopf}.}
  \bibinfo{year}{2004}\natexlab{}.
\newblock \showarticletitle{Learning with local and global consistency}. In
  \bibinfo{booktitle}{\emph{Advances in neural information processing systems,
  NIPS, 2004}}.
\newblock


\bibitem[\protect\citeauthoryear{Zhu, Feng, He, Wang, Li, Zheng, and Zhang}{Zhu
  et~al\mbox{.}}{2020}]%
        {DBLP:conf/ijcai/ZhuF0WLZZ20}
\bibfield{author}{\bibinfo{person}{Hongmin Zhu}, \bibinfo{person}{Fuli Feng},
  \bibinfo{person}{Xiangnan He}, \bibinfo{person}{Xiang Wang},
  \bibinfo{person}{Yan Li}, \bibinfo{person}{Kai Zheng}, {and}
  \bibinfo{person}{Yongdong Zhang}.} \bibinfo{year}{2020}\natexlab{}.
\newblock \showarticletitle{Bilinear Graph Neural Network with Neighbor
  Interactions}. In \bibinfo{booktitle}{\emph{the International Joint
  Conference on Artificial Intelligence, {IJCAI} 2020}}.
\newblock


\bibitem[\protect\citeauthoryear{Zhuang and Ma}{Zhuang and Ma}{2018}]%
        {DBLP:conf/www/Zhuang018}
\bibfield{author}{\bibinfo{person}{Chenyi Zhuang} {and} \bibinfo{person}{Qiang
  Ma}.} \bibinfo{year}{2018}\natexlab{}.
\newblock \showarticletitle{Dual Graph Convolutional Networks for Graph-Based
  Semi-Supervised Classification}. In \bibinfo{booktitle}{\emph{Proceedings of
  the 2018 World Wide Web Conference on World Wide Web, {WWW} 2018}}.
\newblock


\end{thebibliography}

\appendix
\section{APPNP to the architecture of decoupled GCN}
The general architecture of \textit{decoupled GCN} can be written:
\begin{equation}
    \begin{aligned}
\hat {\bm Y} = {\mathop{\rm softmax}\nolimits} \left( {\bar {\bm A}{\bm f_\theta }(\bm X)} \right),
 \end{aligned}
    \label{decouple_app}
\end{equation}
and the formulation of APPNP is: 
\begin{equation}
    \begin{aligned}
    &\bm H^{(0)} = \bm f_\theta (\bm X)\\
    &\bm H^{(k)} = (1-\alpha)\hat {\bm A} \bm H^{(k-1)} + \alpha \bm H^{(0)}, \quad k=1,2,...K-1\\
    &\bm {\hat Y} = \operatorname{softmax}\left(\bm H^{(K)}\right).
    \end{aligned}
    \label{APPNP_app}
\end{equation}
Now we prove that APPNP can be subsumed into the architecture of decoupled GCN, with $\bar {\bm A}={(1 - \alpha )^K}{{\hat {\bm A}}^K} + \alpha \sum\nolimits_{k = 0}^{K - 1} {{{(1 - \alpha )}^k}{{\hat {\bm A}}^k}}$. 

\begin{proof}
We prove it by mathematical induction. 

\textit{Base case:} 

When $K=1$, we have $\hat {\bm Y} = \operatorname{softmax}\left([(1-\alpha)\hat {\bm A}\bm  + \alpha \bm I]\bm f_\theta (\bm X)\right) $ and
    ${\bar {\bm A}}^{(1)}= (1-\alpha)\hat {\bm A}  + \alpha \bm I$,  which satisfies $\bar {\bm A}^{(1)}={(1 - \alpha )^K}{{\hat {\bm A}}^K} + \alpha \sum\nolimits_{k = 0}^{K - 1} {{{(1 - \alpha )}^k}{{\hat {\bm A}}^k}}$.
    
\textit{Inductive step:}

Assume the induction hypothesis that for a particular $K\geq 1$ the equations $\hat {\bm Y} = \operatorname{softmax}\left( {\bar {\bm A}}^{(K)}\bm  H^{(0)}\right) $,  $\bar {\bm A}^{(K)}={(1 - \alpha )^K}{{\hat {\bm A}}^K} + \alpha \sum\nolimits_{k = 0}^{K - 1} {{{(1 - \alpha )}^k}{{\hat {\bm A}}^k}}$ hold. Then we have: 
    \begin{equation}
        \begin{split}
            \hat {\bm Y}&=\operatorname{softmax}\left((1-\alpha)\hat {\bm A}  \bm H^{(K)} +  \alpha \bm  H^{(0)}\right) \\
            &= \operatorname{softmax}\left((1-\alpha)\hat {\bm A}  {\bar {\bm A}}^{(K)}\bm  H^{(0)} +  \alpha \bm  H^{(0)}\right) \\
            &=  \operatorname{softmax}\left([(1-\alpha)\hat {\bm A}  {\bar {\bm A}}^{(K)} + \alpha \bm I ]\bm  H^{(0)} \right) \\
            &= \operatorname{softmax}\left([ {(1 - \alpha )^{(K+1)}}{{\hat {\bm A}}^{(K+1)}} + \alpha \sum\nolimits_{k = 0}^{K} {{{(1 - \alpha )}^k}{{\hat {\bm A}}^k}}] \bm f_\theta (\bm X) \right),
        \end{split}
    \end{equation}
which satisfies $\bar {\bm A}^{(K+1)}={(1 - \alpha )^{(K+1)}}{{\hat {\bm A}}^{(K+1)}} + \alpha \sum\nolimits_{k = 0}^{K} {{{(1 - \alpha )}^k}{{\hat {\bm A}}^k}}$. 

Therefore, APPNP can be subsumed into the form of decoupled GCN, with $\bar {\bm A}={(1 - \alpha )^K}{{\hat {\bm A}}^K} + \alpha \sum\nolimits_{k = 0}^{K - 1} {{{(1 - \alpha )}^k}{{\hat {\bm A}}^k}}$. 
\end{proof}

\label{APPNP_decoupled_GCN}

\section{SOFTMAX function}
\label{softmax}

The original formulation of \textit{decoupled GCN} can be written as:
\begin{equation}
\begin{aligned}
\hat Y = {\mathop{\rm softmax}\nolimits} \left( {\bar A{\bm f_\theta }(X)} \right).
\end{aligned}
\label{decouple_app2}
\end{equation}
The role of the outer ${\mathop{\rm softmax}\nolimits}(\cdot)$ function is to normalize the output into a probability distribution, which is reasonable for optimizing the model with the cross entropy loss.

In our concise formulation of decoupled GCN, the softmax function has been integrated into feature transformation function as follows:
\begin{equation}
\begin{aligned}
{\bm f_\theta }(\bm X)& = \operatorname{softmax}({\widetilde{\bm f}_\theta }(\bm X))\\
\hat {\bm Y} &=  {\bar {\bm A} {\bm f_\theta }(\bm X)}. \\ 
\end{aligned}
\label{decou_app1}
\end{equation}
The major concern is whether its output is reasonable for cross entropy loss. We now show that although the normalization of the output from our concise model may not be hold, it is equivalent to optimize the following objective function, where we give normalized prediction for cross entropy loss.

For arbitrary node $i$ in the training set, the loss function is: 
\begin{equation}
\begin{split}
    l(\hat {\bm y_i}, \bm y_i)&=-\sum\limits_{k\in C} y_{ik} log \left(\hat y_{ik}\right) \\ 
    & = -\sum\limits_{k\in C} y_{ik} log \left( \sum\limits_{j\in V} \bar a_{ij} f_{jk} \right)\\
    & = -\sum\limits_{k\in C} y_{ik} log \left( \frac{ \sum\limits_{j\in V} \bar a_{ij}f_{jk} }{\sum\limits_{q\in V} \bar a_{iq}} \right)  - \sum\limits_{k\in C} y_{ik} log \left(\sum\limits_{j\in V} \bar a_{ij} \right)\\
    & = -\sum\limits_{k\in C} y_{ik} log \left(\sum\limits_{j\in V} g_{ij} f_{jk} \right)  - log \left(\sum\limits_{j\in V} \bar a_{ij} \right),\\
\end{split}
\label{soft_app}
\end{equation}
where $g_{ij} =\frac{\bar a_{ij}}{\sum\limits_{q\in V} \bar a_{iq}}$, satisfying $\sum\limits_{j\in V}g_{ij} = 1$. Thus, we have the normalized prediction $\sum\limits_{k\in C}\sum\limits_{j \in V} p_{ij} f_{jk} = 1$, which meets the constraints. 
We can find the first term of the last line in Equation~(\ref{soft_app}) is a cross entropy loss between the normalized prediction and labels, and the second term is a constant. 



\section{Concise Matrix-Wise Loss Function}
\label{concise_loss}

Overall, PTA optimizes the following objective function:
\begin{equation}
    \begin{aligned}
L_{PTA}(\theta ) = \sum\limits_{i\in V,j \in V_l} w_{ij} \operatorname{CE} \left(\bm f_i, \bm y_j\right) ,\quad \quad {w_{ij}} =\bar a_{ji}{f^\gamma_{i,h(j)}}.
   \end{aligned}
\end{equation}
The equivalent matrix-wise formulation is concise as follows: 
\begin{equation}
    \begin{aligned}
  L_{PTA}(\theta ) & = - \mathop{SUM}\left( \bm Y_{soft} \otimes \bm f(\bm X)^\gamma_{detach} \otimes log\left(\bm f(\bm X)\right)\right).
   \end{aligned}
\end{equation}
where $\otimes$ represents the element-wise product, the subscript ``detach'' denotes no gradient will be backward-propagated along this term, and the $\mathop{SUM}(\cdot)$ represents the sum of all elements on the matrix. 

\begin{proof}
First, let's review the properties of $\bar {\bm A}$ and $\bm y_i$.
\begin{itemize}
    \item [(1)] Note that $\bar {\bm A}$ is calculated from an adjacency matrix $\hat {\bm A}$ with a specific propagation strategy (\ie $\bar{\bm A}=\sum_k \beta_k \hat {\bm A}^k$). In an undirected graph,
     since ${\bm A}$ is a symmetric matrix, we can conclude ${\hat {\bm A}}$ is a symmetric matrix and $\bar {\bm A}$ is symmetric too, \ie $\bar {a}_{ij} = \bar {a}_{ij}$. 
    \item [(2)] $\bm y_j$ is a one-hot vector, where $h(j)$-th element of $\bm y_j$ is 1, \ie $y_{j, h(j)}=1$. 
\end{itemize}
We then have:
\begin{equation}
\begin{split}
L_{PTA}(\theta ) &= \sum\limits_{i\in V,j \in V_l} w_{ij} \operatorname{CE} \left(\bm f_i, \bm y_j\right)\\
&= - \sum\limits_{i\in V,j \in V_l} \bar a_{ji}{f^\gamma_{i,h(j)}} \sum\limits_{k\in C} y_{jk}log \left(f_{ik} \right)\\
&= - \sum\limits_{i\in V,j \in V_l} \bar a_{ji}{f^\gamma_{i,h(j)}}  y_{j,h(j)}log \left(f_{i,h(j)} \right)\\
&= - \sum\limits_{i\in V,j \in V_l} \bar a_{ji} \sum\limits_{k\in C} y_{jk} {f^\gamma_{ik}} log \left(f_{ik} \right) \\
&= - \sum\limits_{i\in V,K \in C}   \left(\sum\limits_{j \in V_l}\bar a_{ji} y_{jk} \right)  {f^\gamma_{ik}} log \left(f_{ik} \right)  \\
&= - \sum\limits_{i\in V,K \in C}   \left(\sum\limits_{j \in V_l}\bar a_{ij} y_{jk} \right)  {f^\gamma_{ik}} log \left(f_{ik} \right)  \\
&=- \mathop{SUM} \left( \bm Y_{soft} \otimes \bm f(\bm X)^\gamma_{detach} \otimes log\left(\bm f(\bm X)\right)\right),\\
\end{split} 
\end{equation} 
where $\bm Y_{soft}$ represents the soft label matrix generated by label propagation. 
\end{proof}

\section{framework of PTA}
\label{framework}
The framework of PTA consists of three parts: data pre-processing, training and inference. 

\paragraph{Data pre-processing} 
We first calculate the soft label matrix $\bm Y_{soft}$ by label propagation. There are various propagation scheme can be chosen. In this paper, we adopt Personalized PageRank for experiments, which can be written as follow: 
\begin{equation}
    \begin{split}
        &\bm Y^{(0)}=\bm Y_{training} \\
        &\bm Y^{(k)}=(1-\alpha) \hat {\bm A} \bm Y^{(k-1)}+ \alpha \bm Y^{(0)}\\
        &\bm y_{i}^{(k)} = \bm y_i, \forall i \in \Set V_l\\
        &\bm Y_{soft} = \bm Y^{(K)}.
    \end{split}
\end{equation}
$\bm Y_{soft}$ will be used in the training stage. 

\paragraph{Training} 
In this step, we train the neural network predictor $\bm f_\theta(\cdot)$ by optimizing the following objective function:
\begin{equation}
\begin{aligned}
L(\theta ) &   = - \mathop{SUM} \left( \bm Y_{soft} \otimes \bm f(\bm X)^\gamma_{detach} \otimes log\left(\bm f(\bm X)\right)\right).\\ 
\end{aligned}
\end{equation}
where $\gamma  = \log (1 + e/\epsilon)$ for PTA,  $\gamma  = 0$ PTS and $\gamma  = 1$ for PTD. Note that we usually use an additional regularization term to avoid over-fitting. 

\paragraph{Inference} 
After training $\bm f_\theta(\cdot)$, following APPNP, ensemble has been adopted for final prediction. 
The formulation is:  
\begin{equation}
    \begin{aligned}
    &\bm H^{(0)} = \bm f_\theta (\bm X)\\
    &\bm H^{(k)} = (1-\alpha)\hat {\bm A} \bm H^{(k-1)} + \alpha \bm H^{(0)}, \quad k=1,2,...K-1\\
    &\bm {\hat Y} = \bm H^{(K)}.
    \end{aligned}
\end{equation}


\section{experimantal details}
\label{detail}

The four datasets used in this paper are downloaded from the official implementation of APPNP~\cite{klicpera_predict_2019} in Github. 
We also follow the data split of APPNP. 
Moreover, we follow the same estimation method for accuracy as APPNP, 
\ie the average performance across 100 different random initializations and uncertainties showing the 95\% confidence level calculated by bootstrapping. All experiments are conducted on a a server with 2 Intel E5-2620 CPUs, 8 2080Ti GPUS and 512G RAM. 

\paragraph{Hyper parameters}  
For fairness, we use the same neural network model scale as the baseline models: two-layer neural network with 64 hidden units. 
We also use the same $K$ and $\alpha$ as APPNP, \ie $K=10, \alpha=0.1$ for three citation graphs, and $K=10, \alpha=0.2$ for co-authorship graph. 
The overall loss function is: $LOSS = \lambda_1 L_1 + \lambda_2 L_2$, where $L_1$ represents loss in Equation~\ref{concise_form}, and $L_2$ represents regularization on the weights of the first neural network layer. 
We fix $\lambda_2 = 0.005$, and find the best $\lambda_1=0.05$. 
We use the Adam optimizer with a learning rate of $lr = 0.1$~\cite{DBLP:journals/corr/KingmaB14}, 
The dropout rate for neural model is $0.0$. 
The addition parameter $\varepsilon$ in Equation(~\ref{pta}) is set as 100 for all datasets. 

\section{Robustness comparison to the Structure Noise}

\label{PTA_APPNP_str}

\begin{figure}[t!]
    \centering
    \includegraphics[width=0.45\textwidth]{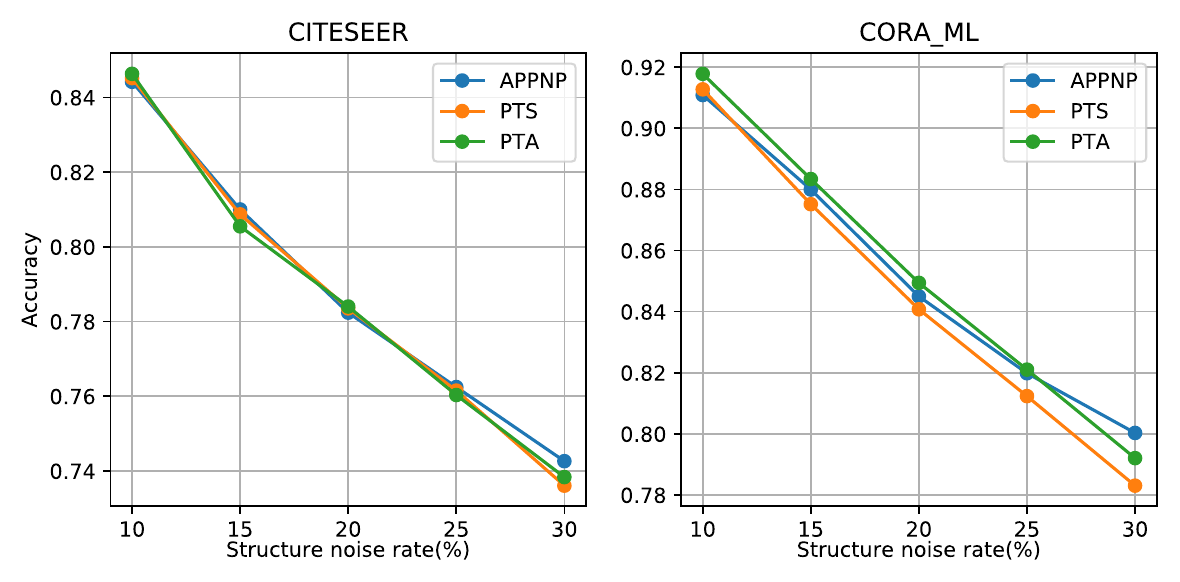}
    \caption{The robustness of different methods to graph structure noise. }
    \label{structure_noise_2}
\end{figure}

Figure \ref{structure_noise_2} shows the structure noise of our PTA comparing with APPNP and PTS. As the performance of these methods are so close, here we just report the noise rate from 10\% to 30\% to amplify the difference. Also, the noise rate of the real-world graph is always on this region. Generally speaking, we can find the performance of PTA and APPNP are better than PTS. Comparing PTA with APPNP, their performance are in the same level. But PTA performs slighter worse than APPNP when the graph has high structure noise rate (\eg rate=30\%). 

\end{document}